\def\eqref#1{equation~\ref{#1}}
\def\1{\bm{1}}
\newcommand{\train}{\mathcal{D_{\mathrm{train}}}}
\newcommand{\valid}{\mathcal{D_{\mathrm{valid}}}}
\newcommand{\test}{\mathcal{D_{\mathrm{test}}}}
\newcommand{\calib}{\mathcal{D_{\mathrm{calib}}}}
\def\vu{{\bm{u}}}
\def\vx{{\bm{x}}}
\def\mA{{\bm{A}}}
\def\mX{{\bm{X}}}
\DeclareMathAlphabet{\mathsfit}{\encodingdefault}{\sfdefault}{m}{sl}
\SetMathAlphabet{\mathsfit}{bold}{\encodingdefault}{\sfdefault}{bx}{n}
\def\gC{{\mathcal{C}}}
\def\gD{{\mathcal{D}}}
\def\gE{{\mathcal{E}}}
\def\gG{{\mathcal{G}}}
\def\gN{{\mathcal{N}}}
\def\gV{{\mathcal{V}}}
\def\gX{{\mathcal{X}}}
\def\gY{{\mathcal{Y}}}
\newcommand{\E}{\mathbb{E}}
\newcommand{\R}{\mathbb{R}}
\newcommand\swapifbranches[3]{#1{#3}{#2}}
\patchcmd{\DeclarePairedDelimiter}{\@ifstar}{\swapifbranches\@ifstar}{}{}
\DeclarePairedDelimiter{\ceil}{\lceil}{\rceil}
\DeclarePairedDelimiter{\parens}{(}{)}
\DeclarePairedDelimiter{\brackets}{[}{]}
\DeclarePairedDelimiter{\braces}{\{}{\}}
\DeclarePairedDelimiter{\abs}{\lvert}{\rvert}
\newcommand{\customcommentson}{false} 
\newcommand{\sidenote}[4]{
    \ifthenelse{\equal{true}{\customcommentson}} {
    \todo[author=#1,color=#2,#3]{#4} 
    }
    {
    }
} 
\newlist{todolist}{itemize}{2}
\setlist[todolist]{label=$\square$}
\newtheorem{theorem}{Theorem}[section]
\newtheorem{lemma}[theorem]{Lemma}
\title{Conformal Prediction: A Theoretical Note and Benchmarking Transductive Node Classification in Graphs}
\author{\name Pranav Maneriker\thanks{Equal Contribution. \textsuperscript{$\dagger$}This work was done while the author was a student at The Ohio State University}~\textsuperscript{$\bm\dagger$} 
    \email pmane@dolby.com\\
    \addr Dolby Laboratories
      \AND
      \name Aditya T. Vadlamani$^*$ \email vadlamani.12@osu.edu \\
      \addr Department of Computer Science and Engineering\\
      The Ohio State University
      \AND
      \name Anutam Srinivasan \email srinivasan.268@osu.edu \\
      \addr Department of Electrical and Computer Engineering\\
      The Ohio State University
      \AND
      \name Yuntian He\textsuperscript{$\dagger$} \email heyuntian.cn@gmail.com\\
      \addr Meta
      \AND
      \name Ali Payani \email apayani@cisco.com \\
      \addr Cisco Systems
      \AND
      \name Srinivasan Parthasarathy\email srini@cse.ohio-state.edu \\
      \addr Department of Computer Science and Engineering\\
      The Ohio State University
}
\begin{document}

\maketitle

\begin{abstract}
Conformal prediction has become increasingly popular for quantifying the uncertainty associated with machine learning models. Recent work in graph uncertainty quantification has built upon this approach for conformal graph prediction. The nascent nature of these explorations has led to conflicting choices for implementations, baselines, and method evaluation. In this work, we analyze the design choices made in the literature and discuss the tradeoffs associated with existing methods. Building on the existing implementations, we introduce techniques to scale existing methods to large-scale graph datasets without sacrificing performance. Our theoretical and empirical results justify our recommendations for future scholarship in graph conformal prediction.
\end{abstract}

\section{Introduction}
Modern machine learning models trained on losses based on point predictions are prone to be overconfident in their predictions~\citep{guo2017calibration}. 
The Conformal Prediction (CP) framework~\citep{vovk2005algorithmic} provides a mechanism for generating statistically sound post hoc prediction sets (or intervals, in case of continuous outcomes) with coverage guarantees under mild assumptions.
Usually, the assumption made in CP is that the data is exchangeable, i.e., the joint distribution of the data is invariant to permutations of the data points.
CP's guarantees are distribution-free and can be added post hoc to arbitrary black-box predictor scores, making them ideal candidates for quantifying uncertainty in complex models, such as neural networks.

Network-structured data, such as social, transportation, and biological networks, is ubiquitous in modern data science applications.
Graph Neural Networks (GNNs) have been developed to model vector representations of network-structured data and be effective in a variety of tasks such as node classification, link prediction, and graph classification~\citep{hamilton2020graph, wu2022graph}.
As the network structure introduces possible dependencies between the data points, uncertainty quantification approaches built for independent and identically distributed (iid) data cannot be directly applied to graph data. However, recent work~\citep{clarkson2023distribution,zargarbashi23conformal,huang2024uncertainty} has demonstrated that, in specific settings, CP can be applied to graph data to generate statistically sound prediction sets for the node classification task, similar to other classification tasks. These methods were designed to leverage the graph structure and homophily to improve upon baseline non-graph CP methods.
Variations of CP, ranging from most to least computationally expensive, include (1) \textbf{Full-CP}~\citep{vovk2005algorithmic}, (2) \textbf{Cross-CP/CV+/Jackknife+}~\citep{vovk2015cross, barber2021predictive}, and (3) \textbf{Split (or Inductive)-CP}~\citep{Papadopoulos2002,papadopoulos2008inductive}. 
Prior work on graph CP has mainly focused on the Split-CP setting due to its computational efficiency and the distribution-free guarantees with black-box models. Thus, we also focus on Split-CP for our work.

\textbf{Key Contributions: }This work makes the following contributions:

\begin{enumerate}
    \item We perform a comprehensive analysis of the existing graph split-CP literature (e.g., CFGNN, DAPS) for transductive node classification. This analysis aims to understand the choices made in method implementation (i.e., scalability) and method evaluation (i.e., dataset preparation and baseline selection). We also extend this analysis to include an analysis of when these graph-based approaches improve over non-graph CP methods, such as TPS and APS.
    \item We present a theorem for evaluating the efficiency (set size) of different conformal prediction methods \textit{prior to} deployment.
    \item We develop a Python library that implements our analysis, enabling practitioners to efficiently apply and evaluate conformal prediction for graph data, with scalability enhancements for large graphs, particularly for compute-intensive methods like CFGNN.
\end{enumerate}

\section{Conformal Prediction}
Conformal prediction is used to quantify model uncertainty by providing prediction sets/intervals with rigorous coverage guarantees.
We will focus on conformal prediction in the classification setting.
Given a calibration dataset $\calib = \{(\vx_i, y_i)\}_{i=1}^n$, where $\vx_i \in \gX = \R^d$ and $y_i \in \gY = \{1, \dots, K\}$, conformal prediction can be used to construct a prediction set $\gC$ for an unseen test point $(\vx_{n + 1}, y_{n + 1})$ such that
\begin{align*}
    \Pr\left[y_{n+1} \in \gC(\vx_{n+1}) \right] \geq 1 - \alpha
\end{align*}
where $1 - \alpha \in (0, 1)$ is a user-specified coverage level.
The only assumption required for the coverage guarantee is that $\calib \cup \{(\vx_{n+1}, y_{n+1})\}$ is exchangeable.
The following theorem provides a general recipe for constructing a prediction set with a coverage guarantee.
\begin{theorem}[\citep{vovk2005algorithmic, angelopoulos2021gentle}]
    Suppose $\{(\vx_i, y_i)\}_{i=1}^{n+1}$ are exchangeable, $s: \gX \times \gY \rightarrow \R$ is a score function measuring the non-conformity of point $(\vx, y)$, with higher scores indicating lower conformity, and a target miscoverage level $\alpha \in [0, 1]$. 
    Let $\hat{q}(\alpha) = \text{Quantile}\left(\frac{\ceil{(n+1)(1-\alpha)}}{n}; \{s(\vx_i, y_i)\}_{i=1}^{n}\right)$ be the conformal quantile.
    Define $\gC(\vx) = \braces{y \in \gY: s(\vx, y) \leq \hat{q}(\alpha)}$. 
    Then,
    \begin{align}
        1 - \alpha +\frac{1}{n+1} \geq \Pr\left[y_{n+1} \in \gC(\vx_{n+1}) \right] \geq 1 - \alpha
        \label{eq:CP:coverage}
    \end{align}
    The upper bound assumes that the scores are unique or that a suitably random tie-breaking method exists.
    \label{thm:CP:coverage}
\end{theorem}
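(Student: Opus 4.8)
The plan is to reduce the coverage statement to a fact about the \emph{rank} of the test score among all $n+1$ scores, which is the only place exchangeability is used. Write $s_i := s(\vx_i, y_i)$ for $i = 1, \dots, n+1$ and set $k := \ceil{(n+1)(1-\alpha)}$. First I would observe that, by definition of $\gC$, the event $\{y_{n+1} \in \gC(\vx_{n+1})\}$ is exactly $\{s_{n+1} \le \hat{q}(\alpha)\}$, and that under the standard (upper) empirical-quantile convention, $\hat q(\alpha) = \text{Quantile}(k/n; \{s_1,\dots,s_n\})$ is the $k$-th smallest value $s_{(k)}$ of $s_1, \dots, s_n$, with the convention $s_{(k)} = +\infty$ when $k > n$, i.e. when $\alpha < 1/(n+1)$. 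In that degenerate case $\gC(\vx_{n+1}) = \gY$ and both inequalities hold trivially, so I would assume $k \le n$ from here on.

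Next I would establish the elementary combinatorial equivalence: $s_{n+1} \le s_{(k)}$ if and only if at most $k-1$ of the first $n$ scores lie strictly below $s_{n+1}$, equivalently if and only if $s_{n+1}$ occupies one of the $k$ smallest positions in the full collection $\{s_1,\dots,s_{n+1}\}$. For distinct scores this is immediate by counting how many $s_i$ ($i\le n$) fall below $s_{n+1}$; in the presence of ties it requires the hypothesized random tie-breaking (equivalently, an i.i.d.\ infinitesimal perturbation of the scores) so that a strict ordering, and hence $\mathrm{rank}(s_{n+1}) \in \{1,\dots,n+1\}$, is well-defined. Granting this, $\Pr[y_{n+1}\in\gC(\vx_{n+1})] = \Pr[\mathrm{rank}(s_{n+1}) \le k]$.

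The crux is then invoking exchangeability: since $(\vx_1,y_1),\dots,(\vx_{n+1},y_{n+1})$ are exchangeable and $s$ is a fixed (measurable) function, the scores $s_1,\dots,s_{n+1}$ are exchangeable, and the tie-breaking can be chosen to preserve this; hence $\mathrm{rank}(s_{n+1})$ is uniform on $\{1,\dots,n+1\}$. Therefore $\Pr[y_{n+1}\in\gC(\vx_{n+1})] = k/(n+1)$. Finally I would sandwich $k$ using the ceiling: from $\ceil{(n+1)(1-\alpha)} \ge (n+1)(1-\alpha)$ we get $k/(n+1) \ge 1-\alpha$, the lower bound (which in fact holds with or without tie-breaking, since ties can only inflate the left-hand probability); and from $\ceil{(n+1)(1-\alpha)} < (n+1)(1-\alpha) + 1$ we get $k/(n+1) < 1 - \alpha + \tfrac{1}{n+1}$, the upper bound.

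I expect the only genuinely subtle point to be the treatment of ties: making $\mathrm{rank}(s_{n+1})$ well-defined, verifying that the tie-breaking rule keeps the augmented scores exchangeable (so the uniform-rank conclusion survives), and noting precisely that it is the \emph{upper} bound in \eqref{eq:CP:coverage} that fails without this assumption. The remaining steps — identifying $\hat q(\alpha)$ with the $k$-th order statistic, the rank/order-statistic equivalence, and the ceiling estimates — are routine bookkeeping.
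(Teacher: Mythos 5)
Your proof is correct and is the standard rank-of-the-augmented-score argument; the paper itself does not prove Theorem~\ref{thm:CP:coverage} but cites it to \citet{vovk2005algorithmic} and \citet{angelopoulos2021gentle}, whose proofs proceed exactly as you describe (reduce coverage to $\Pr[\mathrm{rank}(s_{n+1})\le k]$ with $k=\ceil{(n+1)(1-\alpha)}$, use exchangeability to get a uniform rank, then sandwich $k/(n+1)$ via the ceiling). Your handling of the edge cases — the $k>n$ degenerate regime, the fact that ties only threaten the upper bound, and the need for exchangeability-preserving tie-breaking — is also consistent with the cited treatments.
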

The function $s$ is the non-conformity score function, and it measures the degree of non-agreement between an input $\vx$ and the label $y$, given exchangeability with the calibration data $\calib$, i.e., larger scores indicate worse agreement between $\vx$ and $y$. While Theorem~\ref{thm:CP:coverage} does not restrict the choice of $s$, this choice can significantly impact the size of the prediction set.

The setup of Theorem~\ref{thm:CP:coverage} is called Split-CP, as the score function remains fixed for the calibration split.
In other versions of CP, the score function is usually more expensive to compute as it requires some $1 < k \leq n$ calibration points in addition to the unseen test point. For Full-CP, we have $k = n$, whereas for Cross-CP/CV+/Jackknife+, $k < n$.
When applying Split-CP, the dataset is partitioned into $\mathcal{D} = \gD_{\text{train}}\cup \gD_{\text{valid}}\cup \gD_{\text{calib}}\cup  \gD_{\text{test}}$.

\subsection{Conformal Prediction in Graphs}
The usual tasks of interest with graph data include node classification, link prediction, and graph classification. 
This work focuses on node classification and its extensions to conformal prediction. 
Consider an attributed homogeneous graph $\gG = (\gV, \gE, \mX)$, where $\gV$ is the set of nodes, $\gE$ is the set of edges, and $\mX$ is the set of node attributes.
Let $\mA$ denote the adjacency matrix for the graph.
Further, let $\gY = \{1, \dots, K\}$ denote the set of class labels associated with the nodes.
For $v \in \gV$, $\vx_v \in \gX = \R^d$ denotes its features and $y_v \in \gY$ denotes its true class label.
The task of node classification is to learn a model that predicts the label for each node given node features and the adjacency matrix, i.e., $(\mX, \mA, v)\mapsto y_v$.
Corresponding to the CP partitions, we denote the nodes in the training set as $\gV_{\text{train}}$, the validation set as $\gV_{\text{valid}}$, the calibration set as $\gV_{\text{calib}}$, and the test set as $\gV_{\text{test}}$.
We denote $\gV_{\text{dev}} = \gV_{\text{train}} \cup \gV_{\text{valid}}$ 
as the development set for the base model. 
Note that labels are available only for nodes in $\gV_{\text{train}}$, $\gV_{\text{valid}}$, and $\gV_{\text{calib}}$, and must be predicted for nodes in $\gV_{\text{test}}$.
The model cycle will involve four phases, viz. training, validation, calibration, and testing.
Next, we discuss the different settings for node classification in graphs and the applicability of conformal prediction.

\paragraph{Transductive setting} In this setting, the model has access to the fixed graph $\gG$ during the model cycle. The nodes used in the model cycle are split into $\gV_{\text{train}}$, $\gV_{\text{valid}}$, and $\gV_{\text{calib}}\cup \gV_{\text{test}}$. The specific $\gV_{\text{calib}}$ and $\gV_{\text{test}}$ are subsets of $\gV_{\text{calib}}\cup \gV_{\text{test}}$.
This is the setting considered by~\citet{zargarbashi23conformal} and~\citet{huang2024uncertainty}.
Note that the labels for $\gV_{\text{calib}}$ are not available for training and validation of the base model; however, all the neighborhood information of $\gG$ and the features $\vx_v$ and labels $y_v$, for $v \in \gV_{\text{dev}}$ are available.
During the calibration phase, the $(\vx_v, y_v)$ for $v\in\gV_{\text{calib}}$ and all the neighborhood information are used to compute the non-conformity scores.
This split ensures that the base model cannot distinguish between the calibration and test nodes, and hence exchangeability holds for $v \in \gV_{\text{calib}} \cup \gV_{\text{test}}$.

\paragraph{Inductive setting}
In this setting, the model is only trained and calibrated on a fixed graph $\gG$ during the model cycle. Once the calibration phase is complete, $\gG$ changes to include new unseen test points, breaking the exchangeability assumption as nodes arriving later in the sequence will have access to neighbors that came earlier. This setting is infrequently studied as it requires making specific assumptions about graph structure and the data generation process~\citep{clarkson2023distribution, zargarbashi2024conformal}.

In line with previous work, we focus on the \textbf{\textit{transductive}} setting.
The following theorem states that a scoring model trained on the calibration set will generate scores exchangeable with the test set in the transductive setting, thus allowing for conformal prediction.

\begin{theorem}[\citep{zargarbashi23conformal,huang2024uncertainty}]
    Let $\gG = (\gV, \gE, \mX)$ be an attributed graph, and $\gV_{\text{calib}} \cup \gV_{\text{test}}$ be exchangeable.
    Let $F: \gX^{|V|} \rightarrow \Delta^{|V| \times K}$ be any permutation equivariant model on the graph (e.g., GNNs). 
    Define $F(G) = \Pi \in \Delta^{|V| \times K}$ as the output probability matrix for a model trained on only $\gV_d$.
    Then any score function $s(v, y) = s(\Pi_v, y, \gG)$ is exchangeable for all $v \in \gV_{\text{calib}} \cup\gV_{\text{test}}$.
    \label{thm:exchangeability}
\end{theorem}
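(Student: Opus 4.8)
The plan is to exhibit the vector of calibration-and-test non-conformity scores as a deterministic, permutation-equivariant function of data that is itself exchangeable by hypothesis, and then to invoke the elementary fact that an equivariant function of an exchangeable sequence stays exchangeable; Theorem~\ref{thm:CP:coverage} then applies to $s$ verbatim.

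First I would set up the group action. Enumerate $\gV_{\text{calib}} \cup \gV_{\text{test}} = \{v_1,\dots,v_m\}$, fix an arbitrary $\sigma \in S_m$, and extend $\sigma$ to a permutation $\pi$ of $\gV$ that is the identity on $\gV_d$. Let $\pi \cdot \gG$ be the graph obtained by relabeling nodes through $\pi$ (permute the rows of $\mX$ and the rows and columns of $\mA$). Two facts then do the bookkeeping: $\pi$ leaves the features and labels on $\gV_d$ untouched, and, by definition of permutation equivariance, $F(\pi\cdot\gG)_v = F(\gG)_{\pi^{-1}(v)} = \Pi_{\pi^{-1}(v)}$ for every node $v$.

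Next I would push $\pi$ through the scoring pipeline. Since the trained model $F$ uses the labels of $\gV_d$ only (treating the nodes of $\gV_{\text{calib}}\cup\gV_{\text{test}}$ symmetrically, without their labels), it is, as a function, unchanged when we feed it $\pi\cdot\gG$ instead of $\gG$; and since $s(v,y)$ depends on $v$ only through the row $\Pi_v$ and the label $y_v$, and on $\gG$ only through permutation-compatible quantities, the score carried by node $v$ in the relabeled instance is exactly $s(\Pi_{\pi^{-1}(v)},\,y_{\pi^{-1}(v)},\,\gG)$. In other words, the map $\bigl((\vx_{v_i},y_{v_i})\bigr)_{i=1}^m \mapsto \bigl(s(\Pi_{v_i},y_{v_i},\gG)\bigr)_{i=1}^m$ is $S_m$-equivariant, conditionally on the ($\pi$-fixed) data on $\gV_d$.

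Finally I would close the argument: exchangeability of $\gV_{\text{calib}}\cup\gV_{\text{test}}$ says precisely that reindexing its points by $\sigma$ preserves their joint law (conditionally on the $\gV_d$ data), so applying the equivariant score map above to both sides shows the score vector has the same law before and after $\sigma$; since $\sigma$ was arbitrary, the scores $\{s(v,y_v)\}_{v\in\gV_{\text{calib}}\cup\gV_{\text{test}}}$ are exchangeable. The main obstacle is the middle step: one has to be genuinely careful that ``trained on only $\gV_d$'' yields a predictor --- and a model-selection/validation procedure, and the internal graph normalizations of the GNN --- that is insensitive to relabelings of $\gV_{\text{calib}}\cup\gV_{\text{test}}$, since this is exactly where the transductive split (calib labels withheld, but calib nodes' positions and features visible during training) enters. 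A smaller point to pin down is the precise sense in which $s(\cdot,\cdot,\gG)$ consumes the graph, so that node-indexed scores transform in lockstep with node-indexed data.
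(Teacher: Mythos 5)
Your proposal is correct and follows essentially the same route the paper relies on: the paper does not spell out a proof but defers to \citet{zargarbashi23conformal} and \citet{huang2024uncertainty}, and its stated intuition --- a permutation-equivariant model trained only on $\gV_{\text{dev}}$ cannot distinguish calibration from test nodes, so the node-indexed scores transform in lockstep with the node data and inherit exchangeability --- is exactly what you formalize via the extension of $\sigma$ to a relabeling fixing $\gV_d$ and the equivariance identity $F(\pi\cdot\gG)_v = \Pi_{\pi^{-1}(v)}$. The caveats you flag at the end (that the entire train-and-select pipeline, not just the architecture, must be insensitive to relabelings of $\gV_{\text{calib}}\cup\gV_{\text{test}}$) are precisely the hypotheses the theorem packages into ``permutation equivariant'' and ``trained on only $\gV_d$,'' so no gap remains.
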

The intuition for this theorem is that if the underlying model does not depend on the order of the nodes in the graph, then the outputs will also be exchangeable. This holds for most standard GNNs. The formal proof for this theorem is available in~\citep{zargarbashi23conformal,huang2024uncertainty}.
This theorem paves the way for using conformal prediction for transductive node classification.

For the following sections, we will assume that the base model $\hat{\pi}: \gX \to \Delta_{\gY}$, where $\Delta_{\gY}$ is the probability simplex over $\gY$, is learned using the training and validation sets $\train \cup \valid$. 
The calibration set $\calib$ is used to determine the $\hat{q}(\alpha)$ from Theorem \ref{thm:CP:coverage}, and the test set $\test$ is the set for which we want to construct our prediction sets. Note that $\gD_x = \{(\vx_v, y_v)\mid v\in\gV_x\}$ for graph datasets.
Note that the scores need not lie over a simplex, but instead be in $\R^K$.
However, this greatly simplifies the exposition for the following sections and is standard practice in prior work.
\label{sec:methods}
\subsection{Conformal Prediction Methods}
Several conformal prediction methods are discussed in the CP for transductive node classification literature, including the following general and graph-specific methods.
\subsubsection{General Methods Applied to Transductive Node Classification}
\paragraph{Threshold Prediction Sets (TPS)~\citep{sadinle2019least}} In TPS the score function is $s\parens{\vx, y} = 1 - \hat{\pi}(\vx)_y$, where $\hat{\pi}(\vx)_y$ is the class probability for the correct class. This is the simplest method, which is also shown to be optimal with respect to efficiency, thus making it essential to include as a baseline. However, TPS is known to achieve this efficiency by under-covering hard examples and over-covering easy examples~\citep{angelopoulos2021uncertainty,zargarbashi23conformal}. We note that this discrepancy is claimed to occur as the TPS scores are not \textit{adaptive}, and consider only one dimension of the score for each calibration sample.
However, \citet{sadinle2019least} also proposed a classwise controlled version of TPS.
Instead of defining a single threshold for all classes, they separately compute the threshold for each class for a corresponding $\alpha$.
Thus, we define classwise quantile thresholds for each $y\in\mathcal{Y}$
\begin{equation}
    \hat{q}(\alpha_y, y) = \text{Quantile}\left(\frac{\ceil{(n+1)(1-\alpha_y)}}{n}; \braces{s(\vx_i, y_i)~\mid~{i = 1, \dots, n, y_i = y}}\right)
\end{equation}
and the corresponding prediction sets as
\[
    \gC(\vx) = \{y \in \gY: s(\vx, y) \leq \hat{q}(\alpha_y, y)\}.
\]

Note that this version achieves coverage for each class label, making it more adaptive (i.e., $\Pr\parens{y_{n + 1}\in \gC(\vx) \mid y_{n + 1} = y} \geq 1 - \alpha,~\forall y\in\mathcal{Y}$). The version defined by \citet{sadinle2019least} allows controlling $\alpha_y$ for each class, though we set $\alpha_y = \alpha$ for class-adaptability. The trade-off with the adaptive version is that we have fewer calibration samples for each quantile threshold dimension, which may lead to higher variance in the distribution of coverage~\citep{vovk2012conditional}.
We refer to this variation \textbf{TPS-Classwise}.

\paragraph{Adaptive Prediction Sets (APS)~\citep{romano2020classification}} APS is the most popular CP baseline for classification tasks. For APS, the softmax logits are sorted in descending order, and the score is the cumulative sum of the class probabilities until the correct class. For tighter prediction sets, randomization can be introduced through a uniform random variable. Formally, if $\hat{\pi}(\vx)_{(1)}\geq \hat{\pi}(\vx)_{(2)}\geq \dots \geq \hat{\pi}(\vx)_{(K - 1)}$, $u\sim U(0, 1)$, and $r_y$ is the rank of the correct label, then
\[
    s(\vx, y) = \brackets{\sum_{i = 1}^{r_y} \hat{\pi}(\vx)_{(i)}} \underbrace{- u\hat{\pi}(\vx)_y}_{\text{rand. term}}.
\]

\paragraph{Regularized Adaptive Prediction Sets (RAPS)~\citep{angelopoulos2021uncertainty}} One drawback of APS is that it can produce large prediction sets. RAPS incorporates a regularization term for APS. Given the same setup as APS, define $o(\vx, y) = \abs{\braces{c\in\mathcal{Y} : \hat{\pi}(\vx)_y\geq \hat{\pi}(\vx)_c}}$. Then,

\[
    s(\vx, y) = \brackets{\sum_{i = 1}^{r_y} \hat{\pi}(\vx)_{(i)}} - u\hat{\pi}(\vx)_y + \nu\cdot\max\braces{\parens{o(\vx, y) - k_{reg}}, 0},
\]
where $\nu$ and $k_{reg}\geq 0$ are regularization hyperparameters.
\subsubsection{Graph Methods}
Graphs are rich in structure and network homophily, and recent works in graph CP leverage these properties when developing graph-specific methods. Intuitively, network homophily suggests that non-conformity scores and efficiencies of connected nodes are related.

\paragraph{Diffusion Adaptive Prediction Sets (DAPS)~\citep{zargarbashi23conformal}} To leverage network homophily, DAPS performs a one-step diffusion update on the non-conformity scores. Formally, if $s(\vx, y)$ is a point-wise score function, then the diffusion step gives a new score function
\[
    \hat{s}(\vx, y) = (1 - \delta) s(\vx, y) + \frac{\delta}{|
    \gN_\vx|} \sum\limits_{\vu \in \gN_\vx} s(\vu, y),
\]
where $\delta\in[0, 1]$ is a diffusion hyperparamter and $\gN_\vx$ is the $1$-hop neighborhood of $\vx$. More details and results on DAPS can be found in Appendix \ref{apps:methods:daps}.

\paragraph{Diffused TPS-Classwise (DTPS)} In the case of DAPS, $s$ is chosen to be APS; however, any adaptive score function can be used. In this work, we consider applying diffusion to the scores from TPS-Classwise (the adaptive variation of TPS). More details and results on DTPS can be found in Appendix \ref{apps:methods:daps}.

\paragraph{Neighborhood Adaptive Prediction Sets (NAPS)~\citep{clarkson2023distribution}} NAPS leverages the neighborhood information of a test point to compute a weighted conformal quantile for constructing the prediction set. It is the only method in the study that can work with non-exchangeable data, but it loosens the standard bounds of conformal prediction. The weighting functions used with NAPS are uniform, hyperbolic, and exponential. A detailed explanation of NAPS, the weighting functions, and how it is adapted to the transductive setting is in Appendix \ref{apps:methods:naps}. 

\paragraph{Conformalized GNN (CFGNN)~\citep{huang2024uncertainty}} CFGNN is a model-based approach to graph CP. Similar to ConfTr~\citep{stutz2021learning}, CFGNN uses a second model and an inefficiency-based loss function to correct the scores from the base model. This is feasible as all the steps of the conformal prediction framework (i.e., non-conformity score computation, quantile computation, thresholding) can be expressed as differentiable operations~\citep{stutz2021learning}. The underlying observation for CFGNN is that the inefficiencies are correlated between nodes with similar neighborhood topologies, hence, the secondary model used is a GNN. The inefficiency loss includes a point-wise score function that differs for training and validation. In our work, we set the score function to be APS with randomization between training and validation. 
More details can be found in Appendix \ref{apps:methods:more_cfgnn}.

\subsection{A Theoretical Note on Randomization.} 
Recall that APS~\citep{romano2020classification} can be implemented in both a randomized and a deterministic manner (see Appendix~\ref{appx:APS:tau} for derivations), by using a uniform random variable to determine if the correct class is included or not in the prediction set, during the calibration phase. Both versions of APS provide conditional coverage guarantees; however, the deterministic version has a simpler exposition, so this version is implemented in the popular monographs on conformal prediction by~\citet{angelopoulos2021gentle}.
However, the lack of randomization may result in larger prediction sets. This modification affects computing the conformal quantile during the calibration phase and the prediction set construction during the test phase.

We will formally present the conditions that impact (prediction) efficiency, and later apply it to APS with and without randomization.
Let $A(\vx, y) $ be any non-conformity score function, and let, 
\[
    \hat{q}_{A} = \text{Quantile}\left(\frac{\ceil{(n+1)(1-\alpha)}}{n}; \{A(\vx_i, y_i)\}_{i=1}^{n}\right).
\]

Consider the exchangeable sequence, $\braces{A(\vx_i, y_i^R)}_{i = 1}^{n+1}$, where $y_i^R$ is a randomly selected incorrect label\footnote{For an explanation on why this is exchangeable see Appendix \ref{app:proofs}}. We now define $\alpha_c^A \in [0, 1]$ such that:
\begin{equation}
    \hat{q}_A = \text{Quantile}\parens{ \frac{\ceil{(n+1)(1 - \alpha_c^A)}}{n}; ~\{A(\vx_i,y^R_i)\}_{i=1}^{n} }.
\end{equation}
In other words, $\alpha_c^A$ is the miscoverage level for the random incorrect label when using $\hat{q}_A$ to construct the prediction set. By leveraging a recent result in the literature \citep{cf2025a}, we can compute $\alpha_c^A$ and see that the following result holds (see Appendix~\ref{app:proofs}). 

\begin{equation}
1 - \alpha_c^A \leq \Pr[y^R_{n+1} \in \gC_{A}(\vx_{n + 1})] \leq 1 - \alpha_c^A + \frac{1}{n+1}.
\label{eq:random_incorrect_coverage}
\end{equation}



Let $\tilde A(\vx, y) $ be another non-conformity score and define $\braces{\Tilde A(\vx_i,y^R_i)}_{i=1}^{n+1}$ and $\alpha_c^{\tilde A}$ similarly. The following theorem gives a sufficient condition for when $A$ is more efficient than $\Tilde{A}$.

\begin{restatable}[]{theorem}{apsEff}
If $\alpha_c^A - \alpha_c^{\Tilde{A}} \geq \frac{2}{(n+1)}$ then score function $A$ produces a more efficient prediction set than $\Tilde{A}$. Formally, 
$\E\left[|\gC_{\Tilde{A}}(\vx_{n+1})| - |\gC_A(\vx_{n+1})|\right]  \geq 0$
\label{them:APS:efficiency}
\end{restatable}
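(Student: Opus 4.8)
�The plan is to express the expected set sizes in terms of the miscoverage levels for random incorrect labels and then compare them. First I would observe that the expected prediction set size decomposes as a sum over all labels: for any score function $A$,
\[
\E\left[|\gC_A(\vx_{n+1})|\right] = \sum_{y \in \gY} \Pr\left[y \in \gC_A(\vx_{n+1})\right] = \Pr\left[y_{n+1} \in \gC_A(\vx_{n+1})\right] + (K-1)\cdot\Pr\left[y^R_{n+1} \in \gC_A(\vx_{n+1})\right],
\]
where the last equality uses that the $K-1$ incorrect labels are symmetric (exchangeable) so each incorrect label is included with the same probability, equal to that of a uniformly random incorrect label $y^R_{n+1}$. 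The first term is the coverage of the true label, which by Theorem~\ref{thm:CP:coverage} lies in $[1-\alpha, 1-\alpha+\frac{1}{n+1}]$ for \emph{both} $A$ and $\tilde A$ since both use the conformal quantile at level $\alpha$; hence the true-label contributions differ by at most $\frac{1}{n+1}$.

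Next I would subtract:
\[
\E\left[|\gC_{\tilde A}(\vx_{n+1})|\right] - \E\left[|\gC_A(\vx_{n+1})|\right] = \underbrace{\left(\Pr[y_{n+1}\in\gC_{\tilde A}] - \Pr[y_{n+1}\in\gC_A]\right)}_{\ge -\frac{1}{n+1}} + (K-1)\left(\Pr[y^R_{n+1}\in\gC_{\tilde A}] - \Pr[y^R_{n+1}\in\gC_A]\right).
\]
For the incorrect-label term I would invoke Equation~\eqref{eq:random_incorrect_coverage}: $\Pr[y^R_{n+1}\in\gC_{\tilde A}] \ge 1 - \alpha_c^{\tilde A}$ and $\Pr[y^R_{n+1}\in\gC_A] \le 1 - \alpha_c^A + \frac{1}{n+1}$, so the difference is at least $\alpha_c^A - \alpha_c^{\tilde A} - \frac{1}{n+1}$. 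Combining, the total difference is at least $-\frac{1}{n+1} + (K-1)\left(\alpha_c^A - \alpha_c^{\tilde A} - \frac{1}{n+1}\right)$, which under the hypothesis $\alpha_c^A - \alpha_c^{\tilde A} \ge \frac{2}{n+1}$ is at least $-\frac{1}{n+1} + (K-1)\cdot\frac{1}{n+1} = \frac{K-2}{n+1} \ge 0$ for $K \ge 2$.

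The main obstacle I anticipate is justifying the symmetry step that reduces $\sum_{y \ne y_{n+1}} \Pr[y \in \gC_A]$ to $(K-1)\Pr[y^R_{n+1} \in \gC_A]$ — this needs the exchangeability of the sequence $\{A(\vx_i, y_i^R)\}$ (referenced in the footnote and Appendix~\ref{app:proofs}) and the fact that the score distribution is invariant under relabeling the incorrect classes, so that each incorrect label contributes equally in expectation; care is also needed if the score function could, for a given $\vx$, put different labels on an unequal footing, in which case the argument should be phrased via the expectation over the draw of $y^R_{n+1}$ rather than a literal per-label equality. A secondary point to check is the edge case $K=2$, where the bound is tight at $0$, and confirming that the $\frac{1}{n+1}$ slack from the true-label term genuinely only costs us one factor of $\frac{1}{n+1}$ rather than more. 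Everything else is bookkeeping with the two-sided bounds already established in the excerpt.
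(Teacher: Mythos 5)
Your proof is correct and follows essentially the same route as the paper: the same decomposition of the expected set size into the true-label term plus $(K-1)$ times the inclusion probability of a uniformly random incorrect label, the same use of the two-sided coverage bounds, and the same final bound $(K-1)\bigl(\alpha_c^A - \alpha_c^{\tilde A}\bigr) - \tfrac{K}{n+1} \geq 0$. The subtlety you flag about the $(K-1)$ reduction is exactly how the paper resolves it — via the law of total probability over the uniform draw of $y^R_{n+1}$ (so the identity is an average over incorrect labels, not a per-label equality) — so your self-correction lands on the paper's argument.
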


With Theorem \ref{them:APS:efficiency}, we can further determine the expected efficiency improvement as the calibration set size gets arbitrarily large, as seen in the following corollary. Proofs for both results are available in Appendix \ref{app:proofs}.
\begin{restatable}[]{corollary}{effGain}
    As $n \to \infty$, $\E\brackets{|{\gC_{\Tilde{A}}(\vx_{n + 1})}| - |\gC_{A}(\vx_{n + 1})|} = (K-1) \left(\alpha_c^A - \alpha_c^{\Tilde{A}}\right)$
    \label{cor:eff_gain}
\end{restatable}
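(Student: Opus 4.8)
\textbf{Proof proposal for Corollary~\ref{cor:eff_gain}.}
The plan is to obtain an \emph{exact} formula for $\E[|\gC_A(\vx_{n+1})|]$ valid at every finite $n$, and only at the very end let $n\to\infty$. Writing the set size as a sum of indicators, $|\gC_A(\vx_{n+1})| = \sum_{y\in\gY}\1[y\in\gC_A(\vx_{n+1})]$, linearity of expectation gives $\E[|\gC_A(\vx_{n+1})|] = \sum_{y\in\gY}\Pr[y\in\gC_A(\vx_{n+1})]$. I would then split this sum into the term coming from the true label $y_{n+1}$ and the remaining $K-1$ terms coming from the incorrect labels.

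The key step is to re-express the incorrect-label contribution through the random incorrect label $y^R_{n+1}$. Since $y^R_{n+1}$ is drawn uniformly from $\gY\setminus\{y_{n+1}\}$, conditioning on the calibration data and the true label and applying the law of total expectation yields $\Pr[y^R_{n+1}\in\gC_A(\vx_{n+1})] = \tfrac{1}{K-1}\,\E\big[\,|\gC_A(\vx_{n+1})| - \1[y_{n+1}\in\gC_A(\vx_{n+1})]\,\big]$. Rearranging gives the exact identity $\E[|\gC_A(\vx_{n+1})|] = \Pr[y_{n+1}\in\gC_A(\vx_{n+1})] + (K-1)\,\Pr[y^R_{n+1}\in\gC_A(\vx_{n+1})]$, and the same identity holds with $A$ replaced by $\tilde A$.

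Subtracting the two identities, $\E[|\gC_{\tilde A}(\vx_{n+1})| - |\gC_A(\vx_{n+1})|]$ decomposes into the true-label difference $\Pr[y_{n+1}\in\gC_{\tilde A}(\vx_{n+1})] - \Pr[y_{n+1}\in\gC_A(\vx_{n+1})]$ plus $(K-1)$ times the incorrect-label difference $\Pr[y^R_{n+1}\in\gC_{\tilde A}(\vx_{n+1})] - \Pr[y^R_{n+1}\in\gC_A(\vx_{n+1})]$. Theorem~\ref{thm:CP:coverage} squeezes each true-label coverage into $[1-\alpha,\,1-\alpha+\tfrac{1}{n+1}]$, so their difference lies in $[-\tfrac{1}{n+1},\tfrac{1}{n+1}]$ and vanishes as $n\to\infty$. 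By~\eqref{eq:random_incorrect_coverage}, $\Pr[y^R_{n+1}\in\gC_A(\vx_{n+1})]$ is squeezed into $[1-\alpha_c^A,\,1-\alpha_c^A+\tfrac{1}{n+1}]$ (and likewise for $\tilde A$), so in the limit the incorrect-label difference equals $(1-\alpha_c^{\tilde A}) - (1-\alpha_c^A) = \alpha_c^A - \alpha_c^{\tilde A}$. Combining, the limit of $\E[|\gC_{\tilde A}(\vx_{n+1})| - |\gC_A(\vx_{n+1})|]$ is $(K-1)(\alpha_c^A - \alpha_c^{\tilde A})$, as claimed.

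I expect the main obstacle to be the bookkeeping around $y^R_{n+1}$: one must be careful that it is drawn uniformly from the complement of the \emph{random} true label, so that the passage from ``sum over all incorrect labels'' to ``$(K-1)$ times one probability'' is an exact equality rather than merely a bound, and one must invoke the exchangeability of $\{A(\vx_i,y^R_i)\}_{i=1}^{n+1}$ (the footnoted fact in the excerpt) to license applying~\eqref{eq:random_incorrect_coverage}. A secondary point to state explicitly is the meaning of ``as $n\to\infty$'': $\alpha_c^A$ and $\alpha_c^{\tilde A}$ are themselves defined via the size-$n$ calibration quantile $\hat q_A$, so the displayed equality should be read as the $O(1/(n+1))$ discretization slack washing out in the limit (equivalently, under convergence of the empirical quantiles to their population counterparts).
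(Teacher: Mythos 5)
Your proposal is correct and takes essentially the same route as the paper: the paper's proof simply reuses the final inequality from the proof of Theorem~\ref{them:APS:efficiency} together with its counterpart obtained by swapping $A$ and $\tilde A$, which yields the two-sided sandwich $(K-1)\bigl(\alpha_c^A - \alpha_c^{\tilde A} \mp \tfrac{2}{n+1}\bigr)$ and then lets the slack vanish. Your version re-derives the underlying exact identity $\E[|\gC_A(\vx_{n+1})|] = \Pr[y_{n+1}\in\gC_A(\vx_{n+1})] + (K-1)\Pr[y^R_{n+1}\in\gC_A(\vx_{n+1})]$ and squeezes term by term, which is the same argument unpacked; your closing caveat that $\alpha_c^A$ and $\alpha_c^{\tilde A}$ are themselves $n$-dependent is a fair point that the paper's statement also glosses over.
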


Corollary \ref{cor:eff_gain} allows practitioners to gauge the expected efficiency (i.e., set size) benefits prior to the inference phase.

\section{Empirical Analysis and Insights}

\noindent {\bf Key Research Questions:} In the subsequent sections, this work will answer the following key questions raised when evaluating the literature on CP for transductive node classification.

\begin{enumerate}[leftmargin=10mm]
    \item TPS-Classwise is largely overlooked in the recent literature, given the popularity of APS and TPS not being adaptive. Is TPS-Classwise a viable baseline? (Discussed in Section \nameref{sec:graph_cp:tps}).
    \item Theorem~\ref{them:APS:efficiency} provides a framework for evaluating when a method is more efficient (that is, produces smaller sets) than another. What are the empirically observed implications of Theorem~\ref{them:APS:efficiency} on APS? (Discussed in Sections \nameref{sec:graph_cp:aps} and \nameref{sec:graph_cp:cfgnn}).
    \item Can we scale the computationally intensive CFGNN to handle large-scale graphs, such as ogbn-arxiv, and at what cost to efficiency and coverage, if any? (Discussed in Sections \nameref{sec:graph_cp:cfgnn} and \nameref{sec:scaling_cfgnn}).
    \item Given the theoretical, experimental, and methodological analysis, what are some key insights and guidelines for future scholarship in this area? (Discussed in Sections \nameref{sec:overall}).
\end{enumerate}

\begin{table}
    \small
    \centering
    \caption{Summary statistics for datasets. Predefined splits from the original sources are noted.}
    \label{tab:conformal:datasets}
    \begin{tabular}{cccccccc}
        \toprule
        Dataset & Nodes & Edges & Classes & Features & \# Train & \# Valid & \# Test\\
        \midrule
        Amazon\_Computers & 13,752 & 491,722 & 10 & 767 & - & - & - \\
        Cora & 19,793 & 126,842 & 70 & 8,710 & - & - & - \\
        Coauthor\_CS &  18,333 & 163,788 & 15 & 6,805 & - & - & - \\
         Flickr & 89,250 & 899,756 & 7 & 500 & 44,625 & 22,312 & 22,313 \\
         ogbn-arxiv & 169,343 & 1,166,243 & 40 & 128 & 90,941 & 29,799 & 48,603 \\
        \bottomrule
    \end{tabular}
\end{table}

\noindent\textbf{Datasets.} Table \ref{tab:conformal:datasets} contains a representative set of datasets of varying sizes (i.e., number of nodes/edges) and the number of classes evaluated in this section. The Appendix contains the list of all the datasets used in this study. We used the dataset versions available on the Deep Graph Library ~\citep{wang2019dgl} and Open Graph Benchmark~\citep{hu2020ogb}.

\noindent\textbf{Methods.} Section \ref{sec:methods} includes all of the baseline methods considered in the experiments.

\noindent\textbf{Metrics.} For evaluation, we used the following standard metrics~\citep{shafer2008tutorial}:
\begin{itemize}[leftmargin=10mm]
    \item[(i)] \textbf{Coverage} is the proportion of test points where the true label is in the prediction set, i.e., $\frac{1}{\abs{\test}}\sum_{i\in\test}\1\brackets{y_{i}\in \gC(\vx_{i})}$.
    \item[(ii)] \textbf{Efficiency} is the average prediction set size, i.e., $\frac{1}{\abs{\test}}\sum_{i\in\test}\abs{\gC(\vx_{i})}$
    \item[(iii)] \textbf{Label (or Class) Stratified Coverage}~\citep{sadinle2019least} is the mean of the coverage for each class, i.e., $\frac{1}{\abs{\test}}\sum_{i\in\test}\parens{\frac{1}{K}\sum_{k = 1}^{K}\1\brackets{y_{i}\in \gC(\vx_{i}), y_i = k}}$.
\end{itemize}

\noindent\textbf{Dataset Splits and Training.}
There are several methods of partitioning $\gD$ into $\gD_{\text{train}}$, $\gD_{\text{valid}}$, $\gD_{\text{calib}}$, and $\gD_{\text{test}}$.
Two methods used in existing works on graph conformal prediction for node classification are:

\begin{itemize}[leftmargin=10mm]
    \item[(i)] \textbf{Full-Split (FS) Partitioning}~\citep{huang2024uncertainty}.
In this split style, $\gD$ is partitioned based on a size constraint. For example, in CFGNN \citep{huang2024uncertainty} the authors split the datasets in their experiments randomly, satisfying a $20\%/10\%/35\%/35\%$ constraint for $\gD_{\text{train}}/ \gD_{\text{valid}}/ \gD_{\text{calib}}/ \gD_{\text{test}}$.
Note that a large portion ($65\%$) of the full dataset has labels provided (i.e., in the development or calibration set).
This style is ideal for methods with numerous trainable (or tunable) parameters, as more calibration data is available for training. We consider the following splits:
($\train,\valid,\calib,\test$) = ($0.2, 0.1,0.35, 0.35$), ($0.2,0.2,0.3,0.3$), ($0.3,0.1,0.3,0.3$), and ($0.3,0.2,0.25,0.25$).
    \item[(ii)] \textbf{Label-Count (LC) Sample Partitioning}~\citep{zargarbashi23conformal}.
In this split style, the data is partitioned to ensure an equal number of samples of each label is present in $\gD_{\text{train}}$, $\gD_{\text{valid}}$, and $\gD_{\text{calib}}$. The remaining nodes are $\gD_{\text{test}}$.
This setting is common when only a small proportion of labeled nodes are available (e.g., semi-supervised learning).
Intuitively, this setting is ideal for methods that do not have many parameters to train.
We explore setting the number of samples per class to 10, 20, 40, and 80.
Note that we assign nodes of each class sequentially, so it is feasible in this setup to have some classes with no representative samples in some data subset. Finally, we note that this is a biased sampling scheme which may violate the exchangeability assumption; however, we include this scheme as it is used in the literature~\citep{zargarbashi23conformal, zargarbashi2024robust} for completeness of our analysis.
\end{itemize}
If the dataset has \textbf{predefined splits} (e.g., Flickr, ogbn-arxiv), we ensure that $\train$ and $\valid$ come solely from the training and validation splits, while $\calib\cup\test$ come from the test split.

\label{sec:graph_conformal_implementations}

\paragraph{3.1.}\textbf{Proposed Baseline using TPS with Adaptability.}
\label{sec:graph_cp:tps}
From Figure~\ref{fig:fs:conformal:tps_adaptability}, we see that using TPS-Classwise successfully provides label-stratified coverage in both the FS and LC split settings. This additional adaptability comes at a cost to efficiency as seen in Figure~\ref{fig:conformal:flickr_all}, a standard tradeoff in conformal prediction. One factor that impacts adaptability is the dataset's label distribution, particularly because of labels with little representation, as seen in Figure~\ref{fig:label_distribution_main} for ogbn-arxiv. Because TPS is not adaptive, it has been ignored as a baseline; however, TPS-Classwise's adaptability makes it a viable baseline for future scholarship. 

\begin{figure}[ht!]
    \centering
    \begin{subfigure}{0.5\textwidth}
        \includegraphics[width=0.475\linewidth]{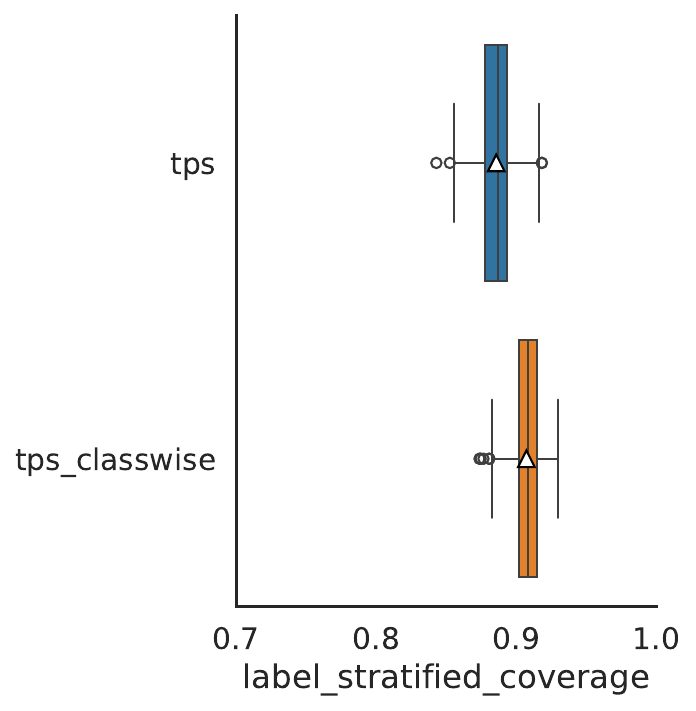}
        \includegraphics[width=0.475\linewidth]{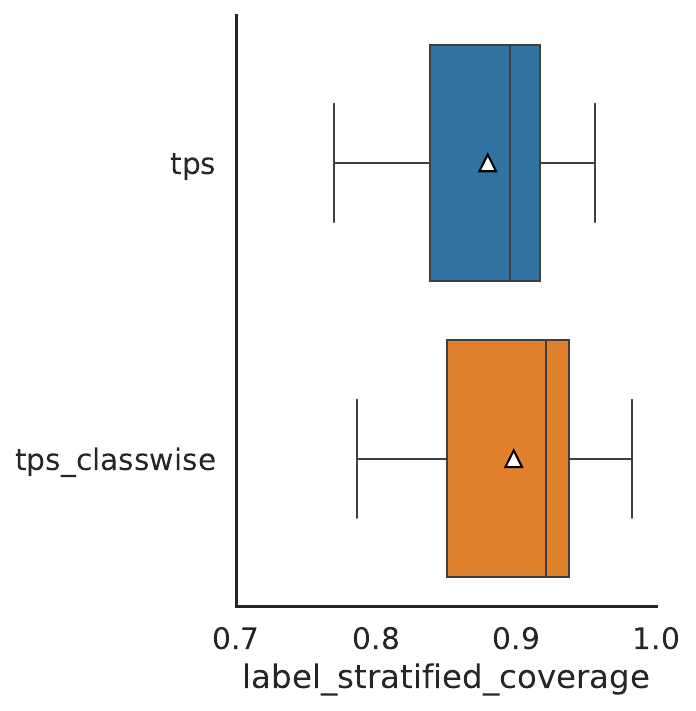} 
        \caption{Amazon\_Computers}
    \end{subfigure}%
    \begin{subfigure}{0.5\textwidth}
        \includegraphics[width=0.475\linewidth]{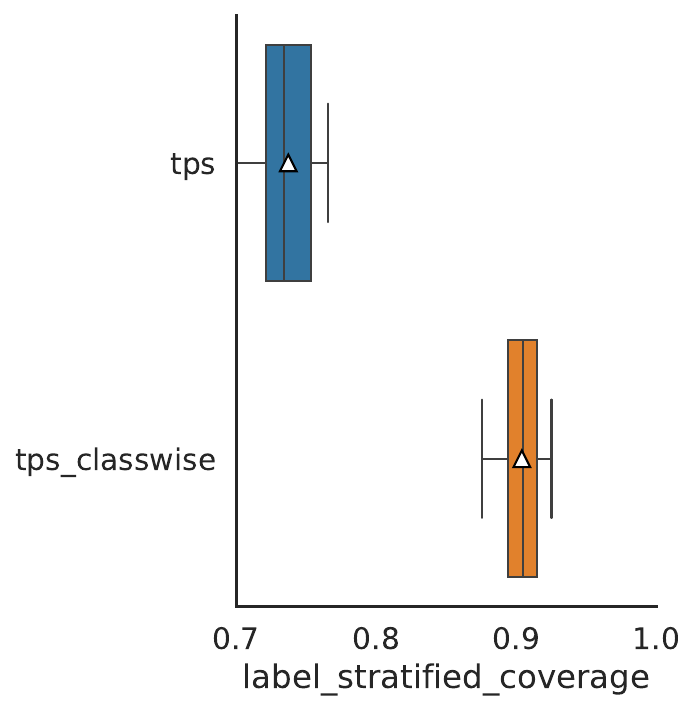}
        \includegraphics[width=0.475\linewidth]{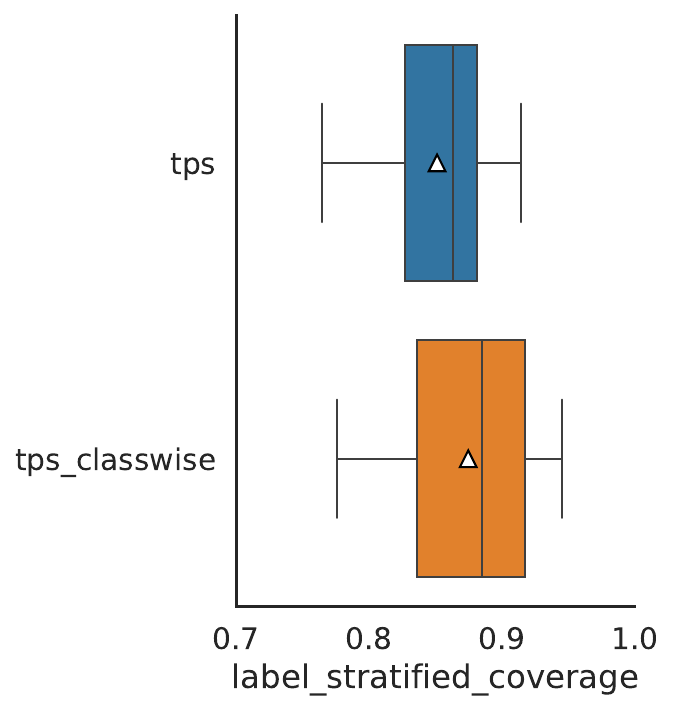} 
        \caption{ogbn-arxiv}
    \end{subfigure}
   \caption{We set the target coverage rate $\alpha = 0.1$. The boxplots present the Label Stratified Coverage for (a) Amazon\_Computers and (b) ogbn-arxiv for both the FS split (left) and LC split (right). We want the means (white triangle) to be around $1 - \alpha = 0.9$. For Labeled Stratified Coverage, TPS-Classwise is comparable to or better than TPS.}
    \label{fig:fs:conformal:tps_adaptability}
\end{figure}

\begin{figure}[ht!]
    \centering
    \includegraphics[width=0.8\linewidth]{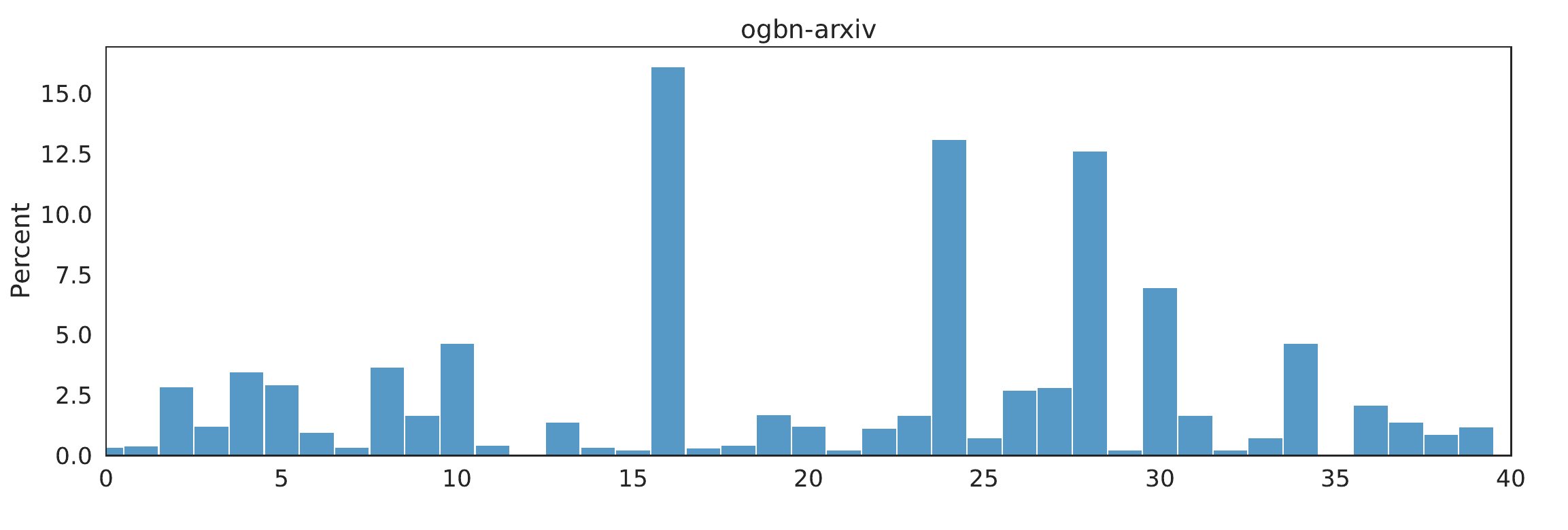}
    \caption{Label Distribution for ogbn-arxiv}
    \label{fig:label_distribution_main}
\end{figure}

\paragraph{3.2.}\textbf{Empirical Observations on Randomization for APS.}
\label{sec:graph_cp:aps}
Applying Theorem \ref{them:APS:efficiency} to randomized APS ($A$) and deterministic APS ($\tilde{A}$), intuitively, as each score in $A$ gets shifted by a small $u\pi$ term to the left, $q_A$ would be lower than $q_{\Tilde{A}}$.
Thus, the miscoverage levels we would search for in the complementary scores $1-\alpha_c^A$ would be less than $1-\alpha_c^{\Tilde{A}}$.
$1 - \alpha_c^A < 1 - \alpha_c^{\Tilde{A}} \implies \alpha_c^A - \alpha_c^{\Tilde{A}} > 0$.
If the shift is sufficiently large, the randomized prediction set is more efficient than the non-randomized one.

In Figure~\ref{fig:APS:efficiency}, we show what this looks like for a practical example over the Cora dataset.
In Figure~\ref{fig:APS:efficiency} (right), the normalized sorted index at which the lower threshold $q_A$ is reached when considering the incorrect classes is lower, i.e., $1-\alpha_c^A$ is lower, and hence $\alpha_c^A$ is higher.
As a part of the proof, we show dependencies on $\frac{1}{(n+1)}$ and $(K-1)$, which indicates the improvements are more pronounced for larger $\calib$ and many classes.

\begin{figure}[ht!]
    \centering
    \includegraphics[width=0.75\linewidth]{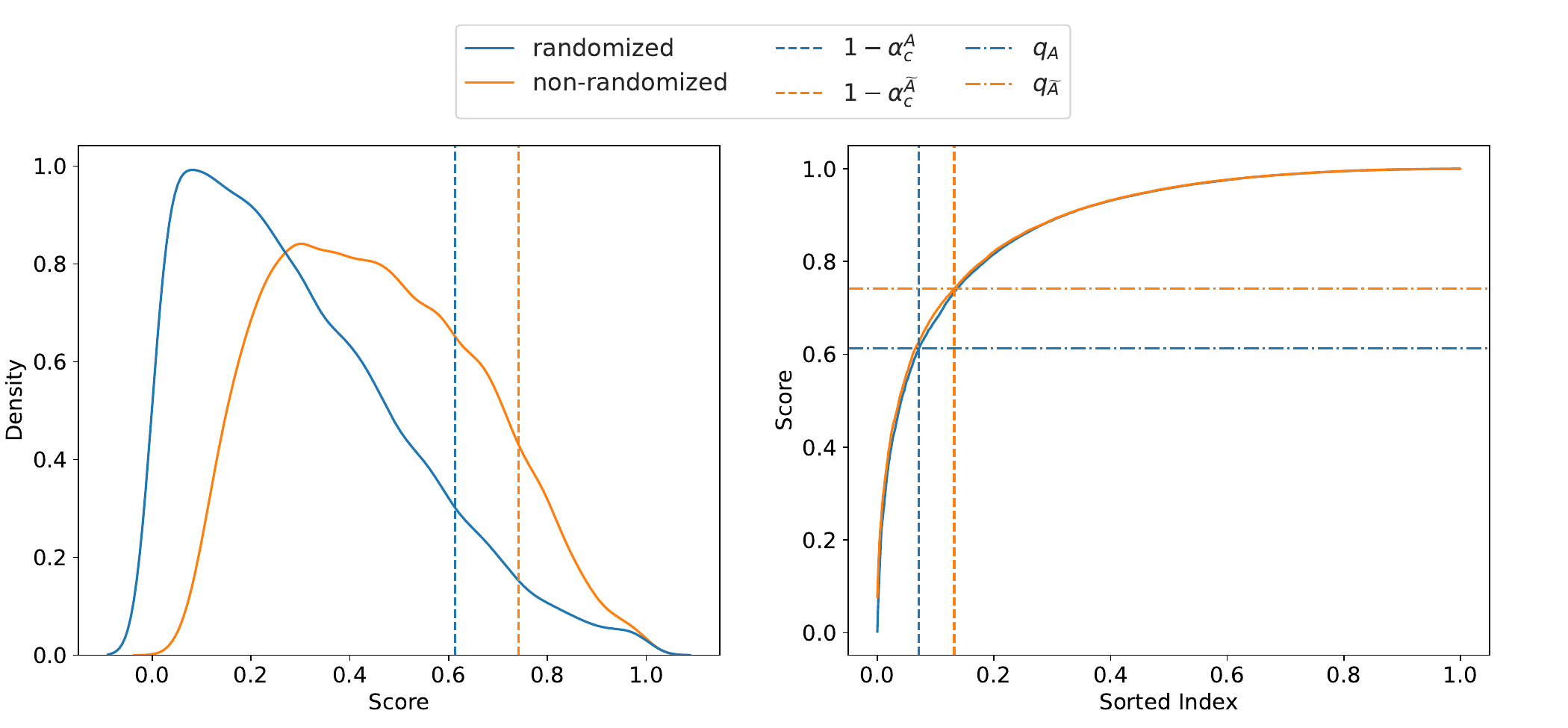}
    \caption{Scores for the Cora dataset using the randomized and non-randomized versions of APS. In the left plot, the vertical lines show the shift in the standard conformal quantiles for $A$ (randomized APS) and $\Tilde{A}$ (non-randomized APS) for $0.9$ coverage. In the right plot, the vertical lines show the shift in the $1-\alpha_c$ value for $A$ and $\Tilde{A}$ using scores for the incorrect classes. We have $(1 - \alpha_c^{\Tilde{A}}) - (1 - \alpha_c^A) \gg \frac{2}{n + 1}\iff \alpha_c^A - \alpha_c^{\Tilde{A}} \gg \frac{2}{n + 1}$ which satisfies the condition for Theorem~\ref{them:APS:efficiency}. Thus, $A$ is more efficient as seen in the left plot since $q_A < q_{\Tilde{A}}$.}
    \label{fig:APS:efficiency}
\end{figure}
Figure~\ref{fig:conformal:aps_vs_randomized} provides box plots that compare the efficiency of randomized and non-randomized versions of APS across different datasets.
We observe that the randomized version consistently provides a more efficient prediction set for each split type.
This effect is most pronounced for datasets with many potential classes in the FS split, which aligns with the intuition from Theorem~\ref{them:APS:efficiency} described above and Corollary~\ref{cor:eff_gain}.
Overall, the empirical results show that the effect of randomized APS is more pronounced for larger values of $K$. This observation suggests that APS with randomization should be the default implementation of APS for future scholarship when looking to optimize (prediction) efficiency. 

\begin{figure}[ht!]
    \centering
    \begin{subfigure}{0.45\textwidth}
    \includegraphics[width=0.8\linewidth]{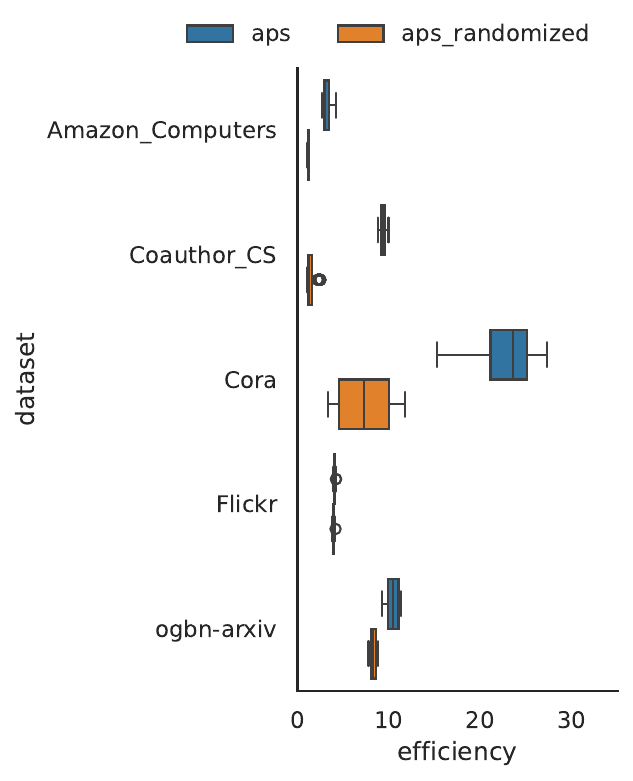}
    \end{subfigure}%
    \begin{subfigure}{0.45\textwidth}
        \includegraphics[width=0.8\linewidth]{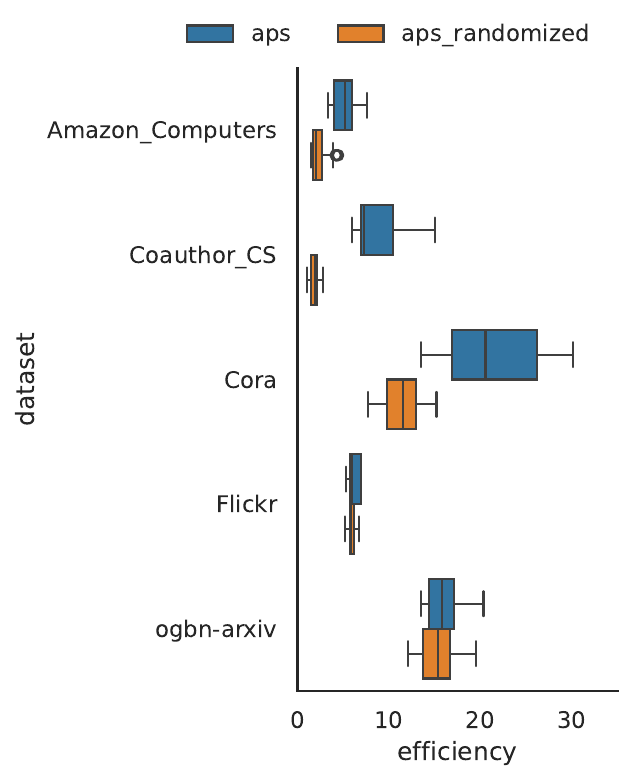}
    \end{subfigure}
    
    \caption{We set the target coverage rate $\alpha = 0.1$. Box plots depicting the efficiencies of APS and Randomized APS across different datasets and multiple runs in both the FS split (left) and LC split (right). Using randomization (the lower box plot for each dataset) consistently improves over the non-randomized version as the efficiencies are distributed around smaller values.} 
    \label{fig:conformal:aps_vs_randomized}
\end{figure}

\paragraph{3.3}\textbf{Impact of Baseline and Scoring Function in CFGNN Inefficiency Loss.}
\label{sec:graph_cp:cfgnn}
The choice of non-conformity score in the inefficiency loss during calibration and testing plays a vital role in determining the overall performance of GNN-based conformal prediction. To illustrate, we replicate an experiment by
\citet{huang2024uncertainty} who use TPS for the inefficiency loss during the calibration stage and non-randomized APS for the testing stage, which gives a significant improvement in efficiency over the baseline (Figure~\ref{fig:CFGNN:preliminary} right). However, if randomized APS is used instead (Figure~\ref{fig:CFGNN:preliminary} left), we observe that the baseline is competitive across various coverage thresholds. It is worth noting that CFGNN appears robust to this choice, although the gains in efficiency are not as dramatic in the randomized setting. We also note that the confidence bars are narrower in the randomized setting. Similar results were observed on other datasets (see Appendix~\ref{appx:more_results}).
\begin{figure}[ht!]
    \centering
    \includegraphics[width=0.75\textwidth]{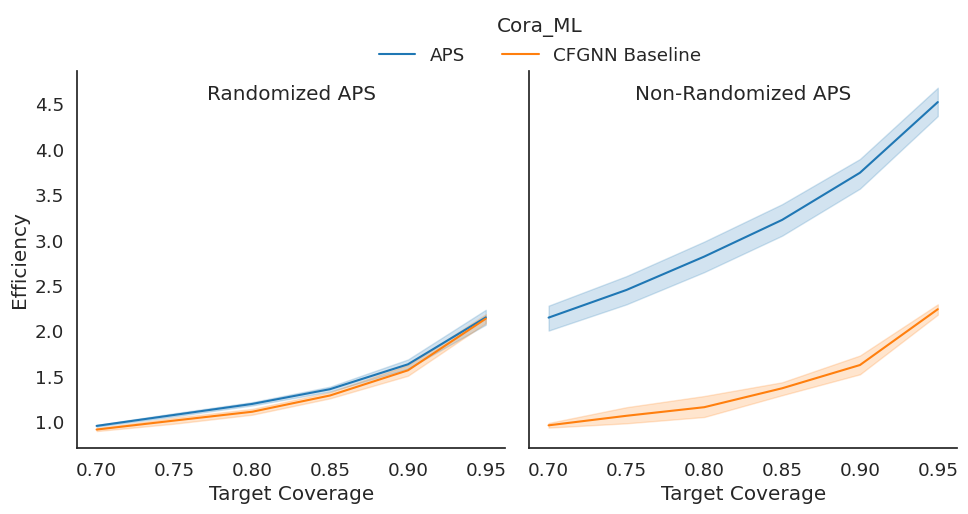}
    \caption{
The plot on the right replicates an experiment~\citep{huang2024uncertainty} plotting efficiency over various coverage rates for the Cora\_ML dataset (a subset of the Cora dataset) for both CFGNN and a baseline model.
The plot on the left uses APS with randomization when constructing the final prediction sets. These plots illustrate the benefits of using randomization on baseline performance.}
    \label{fig:CFGNN:preliminary}
\end{figure}
\begin{figure}[ht!]
    \centering
    \begin{subfigure}{0.5\textwidth}
        \includegraphics[width=0.8\linewidth]{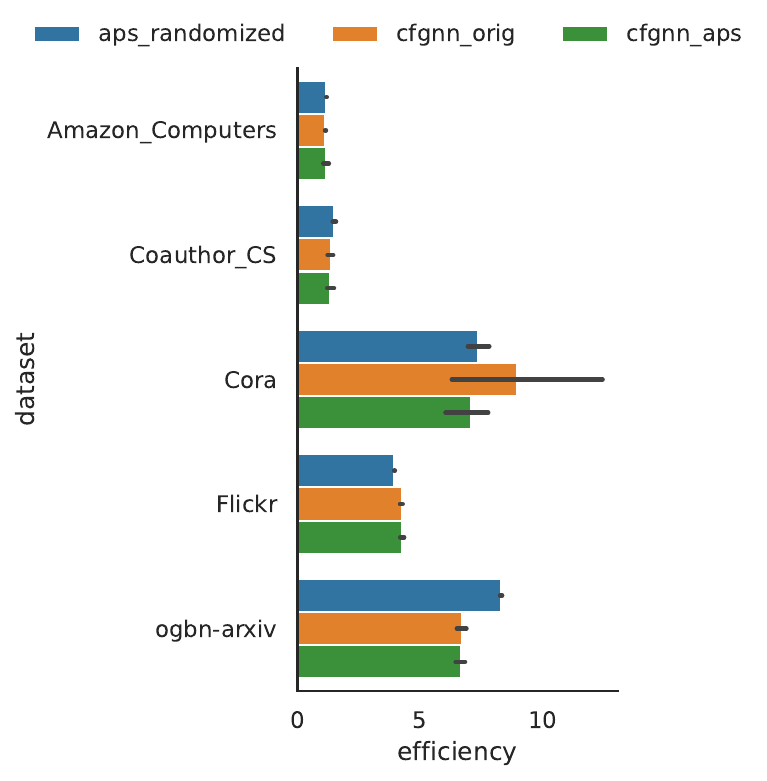}
    \end{subfigure}%
    \begin{subfigure}{0.5\textwidth}
        \includegraphics[width=0.8\linewidth]{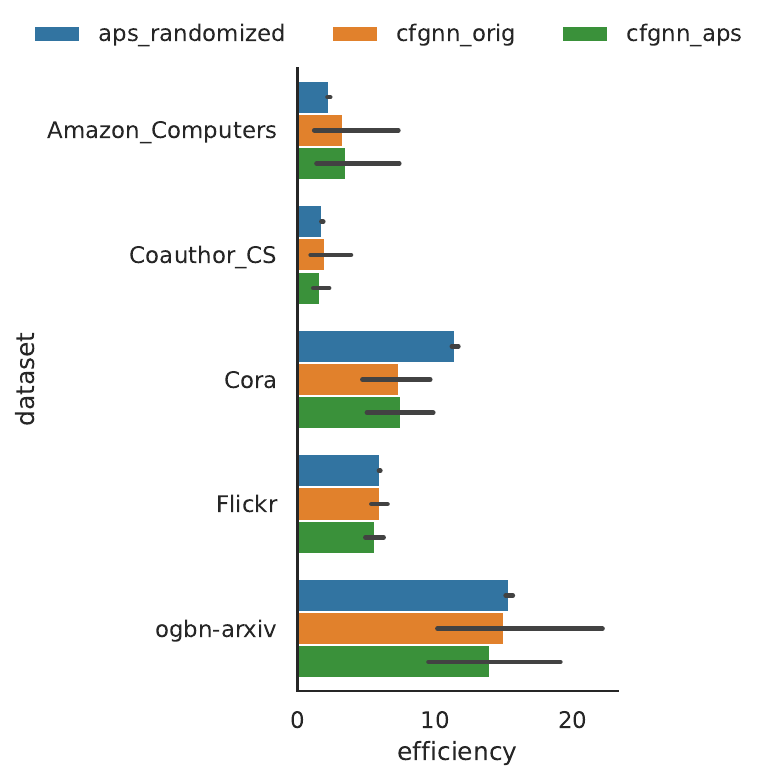}
    \end{subfigure}
    \caption{Bar charts denoting efficiency for `cfgnn\_aps', `cfgnn\_orig', and `aps\_randomized' for the FS split (left) and LC split (right) at $\alpha=0.1$. We see that `cfgnn\_aps' improves or matches efficiency in most cases.}
    \label{fig:conformal:cfgnn_aps_vs_orig}
\end{figure}

Based on these insights, we use an improved version of CFGNN, which uses APS with randomization for \textit{both} training and evaluation, labeled as `cfgnn\_aps.' The original implementation is labeled as `cfgnn\_orig'. Our library implementation of CFGNN allows for either TPS or APS to be used for calibration and testing and is extensible to other conformal prediction methods. We extend our results to more datasets and compare against `aps\_randomized' (i.e., the baseline without applying the topology-aware correction via CFGNN). Figure~\ref{fig:conformal:cfgnn_aps_vs_orig} presents the results for both the FS and LC split styles. We observe that 'cfgnn\_aps' is as good as `cfgnn\_orig' and can even improve upon `aps\_randomized' in terms of efficiency. In the LC split setting, CFGNN is seen to be quite brittle because there may not be enough data to train a second GNN.

\paragraph{3.4} \textbf{Scaling CFGNN for Large Graphs.}\label{sec:scaling_cfgnn}
The original CFGNN implementation uses full batch training. While this approach has merits, it also poses challenges, particularly when dealing with larger graphs. The evident need for a more scalable solution led us to implement modifications.
We implemented a batched version of CFGNN to ensure it can be used for larger graphs (e.g., ogbn-arxiv). To further scale CFGNN, we cache the outputs from the base model to be treated as features for CFGNN training rather than having to sample neighbors for both the base model and CFGNN, significantly speeding up the computation in training and evaluation. 

\begin{table}[ht!]
\caption{Impact of different CFGNN implementations between the original, batching, and batching+caching. Used the \textbf{best} CFGNN architecture (w.r.t. validation efficiency) for each dataset. We run $5$ trials for each setup and report a $95\%$ confidence interval. Amzn\_Comp = Amazon\_Computers}
\label{tab:conformal:cfgnn_scalability}
\small
\centering
\begin{tabular}{lcccccc}
\toprule
\multicolumn{1}{r}{method$(\rightarrow)$}& \multicolumn{2}{c}{original} & \multicolumn{2}{c}{batching} & \multicolumn{2}{c}{batching+caching} \\
\midrule
dataset$(\downarrow)$& Runtime & Efficiency & Runtime & Efficiency  & Runtime & Efficiency \\
\midrule
Amzn\_Comp & 664.98 $\pm$ 10.90 & 1.29 $\pm$ 0.05 & 205.29 $\pm$ 3.96 & 1.15 $\pm$ 0.01 & 73.85 $\pm$ 1.56 & 1.14 $\pm$ 0.01 \\
Cora & 1378.01 $\pm$ 13.35 & 6.81 $\pm$ 1.07 & 203.61 $\pm$ 2.52 & 8.34 $\pm$ 1.12 & 73.60 $\pm$ 1.53 & 7.96 $\pm$ 0.59 \\
Coauthor\_CS & 638.56 $\pm$ 6.14 & 1.10 $\pm$ 0.01 & 88.49 $\pm$ 1.14 & 1.15 $\pm$ 0.01  & 31.67 $\pm$ 0.53 & 1.14 $\pm$ 0.01\\
Flickr & 868.87 $\pm$ 9.98 & 4.23 $\pm$ 0.07 & 567.39 $\pm$ 6.50 & 4.23 $\pm$ 0.04 & 56.71 $\pm$ 1.30 & 4.24 $\pm$ 0.03 \\
ogbn-arxiv & 410.91 $\pm$ 8.29 & 7.07 $\pm$ 0.05 & 373.19 $\pm$ 3.64 & 7.28 $\pm$ 0.06  & 111.38 $\pm$ 1.95 & 6.91 $\pm$ 0.01\\
\bottomrule
\end{tabular}
\vspace{3mm}
\end{table}

We compare three CFGNN implementations to demonstrate the impact of batching and caching on the runtime. Across all comparisons, we use the FS split, with a $20\%/20\%/35\%/25\%$ data split. We use the best base GNN (w.r.t. accuracy) and best CFGNN (w.r.t. efficiency) architecture from hyperparameter tuning. The baseline implementation follows the setup by~\citet{huang2024uncertainty}, where the CFGNN is trained with full batch gradient descent for $1000$ epochs.
Our mini-batch implementation achieves comparable efficiency within $50$ epochs without any batch size tuning (we set the batch size to $64$ for consistent comparison) as shown in Table~\ref{tab:conformal:cfgnn_scalability}.
Finally, we cache the output probabilities from the base GNN on top of the batched implementation, reducing the runtime further.
Table~\ref{tab:conformal:cfgnn_scalability} compares the batching and the combined batching + caching improvements.
Our implementation can achieve improvements ranging from $3.69\times$ (ogbn-arxiv) to $20.16\times$ (Coauthor\_CS) in runtime over the baseline implementation\footnote{We provide config files for the best base GNN and CFGNN architectures in the attached code for every dataset/split type.}.

\paragraph{3.5.}\textbf{Summary of Key Insights For Graph Conformal Prediction.}
\label{sec:overall}
In Figure \ref{fig:conformal:flickr_all}, we present an overall comparison of the different methods on the Flickr dataset in terms of efficiency and label-stratified coverage (i.e., adaptability) using the full-split data partitioning style. Similar figures for the other datasets can be found in Appendix \ref{appx:more_results}. The following sections will analyze some of these methods in more detail, while a discussion on others can be found in Appendix~\ref{appx:more_methods}.

\begin{figure}[htbp!]
    \centering
    \includegraphics[width=\linewidth]
    {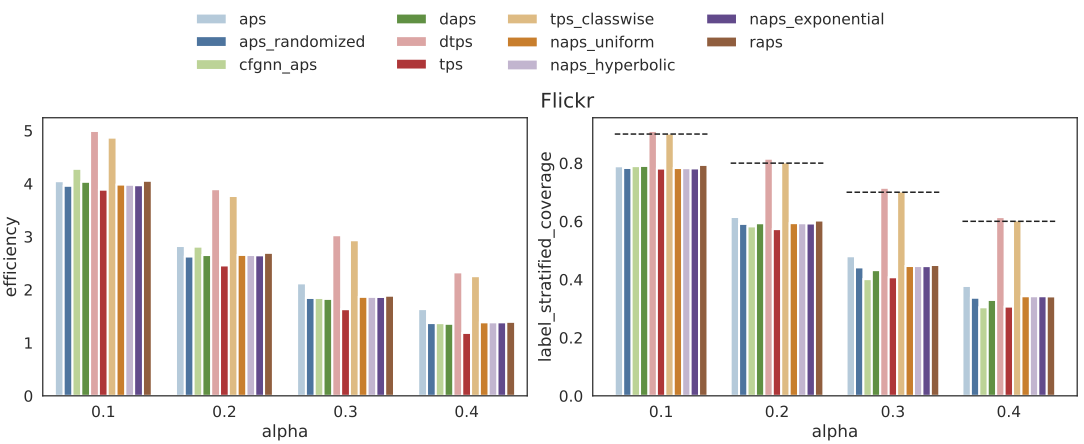}
    \caption{Efficiency (left) and Label Stratified Coverage (right) for all the conformal methods for the Flickr dataset. This uses the FS split style with multiple values for $\alpha$. The dashed black line indicates the desired label-stratified coverage. Other NAPS variants (i.e., exponential, hyperbolic) are discussed in Appendix \ref{appx:more_methods}.}
    \label{fig:conformal:flickr_all}
\end{figure}

We analyze several general and graph-specific CP methods proposed in the recent literature for transductive node classification. Based on our theoretical, empirical, and methodological results, we encourage practitioners to be precise and careful when developing and evaluating conformal prediction methods for graphs. We analyze the efficiency of all the methods considered in different datasets. Some key insights we report include:

\begin{itemize}[leftmargin=*]

\item \textbf{TPS is Efficient.} TPS is consistently the most efficient method for each dataset, regardless of the data split.
However, this often comes at a cost to adaptability, as shown in Figure \ref{fig:conformal:flickr_all} for the Flickr dataset. 

\item \textbf{TPS-Classwise is a viable baseline.} While TPS is provably optimal for efficiency, the classwise version has largely been ignored in the literature as a baseline. Our results demonstrate that TPS-Classwise achieves label-stratified coverage--stemming from its classwise quantile procedure-- making it an `adaptive' method that is a competitive baseline. This adaptability usually comes at a small cost to efficiency, though this is not the case with all datasets (see PubMed in Figure \ref{fig:conformal:all_methods_best}). 

\item \textbf{Diffused Classwise TPS.} We also explored an alternative to classwise TPS where we apply the diffusion operator from \citet{zargarbashi23conformal} on top of the `adaptive' TPS-Classwise,  discussed earlier. As with other datasets, DTPS also provides label-stratified coverage for Flickr (see Figure~\ref{fig:conformal:flickr_all}). More discussions on DTPS and Diffusion Adaptive Sets (DAPS)~\citep{zargarbashi23conformal} can be found in Appendix~\ref{apps:methods:daps}. 

\item \textbf{APS with randomization is provably more efficient than APS without randomization and should be the default setting for future evaluations involving APS.} From Theorem \ref{them:APS:efficiency}, we observe that the randomization term in APS Randomized decreases the incorrect label scores \textit{substantially less} than the conformal quantile, meaning these incorrect labels are less likely to be included when constructing the prediction set, therefore improving efficiency. This result is further pronounced for datasets with many classes because most incorrect labels shift less than the correct label, suggesting that APS Randomized should be a default baseline in the future.

\item \textbf{NAPS provides a balanced tradeoff (and applies to the non-exchangeable setting).} Using the NAPS ~\citep{clarkson2023distribution} scoring function achieves a balance between efficiency and label-stratified coverage. Before computing a quantile, NAPS uses neighbor distances to assign weights to non-conformity scores. In the original paper and our implementation, NAPS has a hyperparameter $k$, which is the maximum distance considered when assigning a weight, and everything beyond that is given zero weight. A unique characteristic of NAPS is that it is the only method discussed that provides guarantees for the non-exchangeable setting. More details on NAPS and its variations can be found in Appendix~\ref{appx:more_methods}. Appendix~\ref{appx:more_results} has similar plots for the other datasets. 

\item \textbf{Randomization in CFGNN.} 
As noted in Figure~\ref{fig:conformal:cfgnn_aps_vs_orig}, randomization within the inefficiency loss function can improve the efficiency of the original CFGNN implementation, particularly for datasets with many classes (e.g., Cora). It is also worth reiterating that the perceived efficiency gains of the original CFGNN are not as significant compared to APS with randomization (see Figure~\ref{fig:CFGNN:preliminary}).

\item \textbf{Scaling up CFGNN.} We reimplement CFGNN using batching and caching to scale the method for larger graphs (e.g., ogbn-arxiv). We observe runtime improvements (see Table \ref{tab:conformal:cfgnn_scalability}) and, in some cases, batching with caching is the only method that can run CFGNN (e.g., ogbn-products in Table \ref{tab:conformal:cfgnn_runtime_all}). Moreover, Figure \ref{fig:conformal:cfgnn_aps_vs_orig} shows that APS Randomized is slightly less efficient than CFGNN without the same computational demands. Thus, scaling makes CFGNN more viable for settings with limited resources.

\item \textbf{Impact of Label Distribution.} We observe that the dataset's label distribution impacts the difference in adaptability between TPS and TPS-Classwise. In Figure \ref{fig:fs:conformal:tps_adaptability}, we find an increase in the adaptability of TPS when using LC split instead of FS split for ogbn-arxiv compared to Amazon\_Computers. The shift in performance is exacerbated when many classes have little representation (highly imbalanced), as seen in Figure~\ref{fig:label_distribution_main}. The difference in adaptability is also observed in other non-conformity scores and is particularly pronounced for the ogbn-products dataset (see Figure~\ref{fig:label_distributions}).

\item \textbf{Incorrect label miscoverages are important.} Conformal prediction is concerned with the true label miscoverage; however, since prediction sets also include incorrect labels, the miscoverage levels of incorrect labels are important to account for when considering a method's efficiency. With Theorem \ref{them:APS:efficiency}, we prove that a sufficient condition for when a CP method is more efficient than another method depends on the incorrect label miscoverage. Theorem \ref{them:APS:efficiency} applies to other domains outside the graph space. We also see in Corollary \ref{cor:eff_gain} that the incorrect label miscoverages determine how much the efficiency improves with large calibration sets. This improvement is pronounced for datasets with many classes.

\item \textbf{The Benefit of RAPS is not as significant for Transductive Node Classification.} In the general CP literature, RAPS has demonstrated improvements to (prediction) efficiency over APS consistently, however, these same improvements aren't as evident for graph datasets as seen in Figure \ref{fig:conformal:flickr_all} where RAPS does not beat out APS Randomized in terms of (prediction) efficiency. Similar results for other datasets can be found in Appendix \ref{appx:more_results} and \cite{zargarbashi23conformal}.
\end{itemize}

\textbf{Guidelines.} To conclude this analysis, if computational resources are not a limiting factor, CFGNN with the appropriate loss functions (e.g., APS randomized) has demonstrable success in improving efficiency, while being comparable to other methods in terms of adaptability. When deciding which method is more efficient in general, Theorem \ref{them:APS:efficiency} can be applied to the calibration set to determine this prior to deployment. If adaptability is critical, then methods such as TPS-Classwise and DTPS are both viable and the only methods that provide \textit{a guarantee} while not being as computationally expensive. APS and APS Randomized are good middle-ground methods, and NAPS is the only method that provides guarantees for non-exchangeable settings. Lastly, we provide a Python library\footnote{Code: \url{https://github.com/pranavmaneriker/graphconformal-code}} which can be used to implement these methods.

\section{Concluding Remarks}
We present a comprehensive benchmarking study of conformal prediction for node classification. We provide novel insights related to design choices that impact efficiency, adaptability, and scalability. Along the way, we offer a new theoretical rationale for the importance of randomization and discuss some novel methodological improvements and directions for future work. One future direction pertains to the space of fairness auditing. Several works have dealt with the auditing fairness of ML models through measuring uncertainty in fairness definitions~\citep{ghosh2021justicia,maneriker2023, yan2022active}, but they rely on the assumption of IID. While conformal prediction works with the notion of \textit{miscoverage}, more relevant notions of error can be considered using the generalized framework of conformal risk control~\citep{angelopoulos2024conformal}.

\subsubsection*{Acknowledgments}
The authors acknowledge support from National Science Foundation (NSF) grant \#2112471 (AI-EDGE) and a grant from Cisco Research (US202581249).  Any opinions and findings are those of the author(s) and do not necessarily reflect the views of the granting agencies. The authors also thank the anonymous reviewers for their constructive feedback on this work.



\bibliography{main}
\bibliographystyle{tmlr}

\clearpage
\appendix
\counterwithin{figure}{section}
\counterwithin{table}{section}
\renewcommand\thefigure{\thesection\arabic{figure}}
\renewcommand\thetable{\thesection\arabic{table}}

\section{Optimal \texorpdfstring{$\tau$}{τ} for APS}
\label{appx:APS:tau}

The most popular baseline in the graph conformal prediction literature is Adaptive Prediction Sets (APS). 
\citep{romano2020classification} introduces APS by defining an optimal prediction set construction mechanism under oracle probability.
Suppose we estimate a prediction function $\hat{f}$ that correctly models the oracle probability $\Pr[Y=y|X_{test}=\vx] = \pi_y(\vx)$ for each $y \in \gY = \{1, \dots, K\}$ 
Let $\pi_{(1)}(\vx), \dots, \pi_{(K)}(\vx)$ be the sorted probabilities in descending order.
For any $\tau \in [0, 1]$, define the generalized conditional quantile funciton at $\tau$ as
\begin{align}
    L(x; \pi, \tau) =  \min\left\{ k \in \{1, \dots, K\}, \sum\limits_{j=1}^k \pi_{(j)}(\vx) \geq \tau \right\}
    \label{eq:APS:L}
\end{align}
The corresponding prediction set, $\gC_\alpha^{\text{or}}(\vx)$, is constructed as 
\[
    \gC_\alpha^{\text{or+}}(\vx) = \{y \in \gY: \pi_y(\vx) \geq \pi_{(L(\vx; \pi, 1-\alpha))}(\vx)\}
\]
where $\text{or}$ indicates the usage of the oracle probability.
Further, they define tighter prediction sets in a randomized fashion using an additional uniform random variable $u \sim \text{Uniform}(0, 1)$ as a parameter to construct a generalized inverse. 
This idea draws upon the idea of uniformly most powerful tests in the Neyman-Pearson lemma for level-$\alpha$ sets~\citep{neyman1933ix}.
Define
\begin{align}
    S(\vx, u; \pi, \tau) = \begin{cases}
        \{y \in \gY: \pi_y(\vx) > \pi_{(L(\vx; \pi, \tau))}(\vx)\} & u < V(\vx; \pi, \tau) \\
        \{y \in \gY: \pi_y(\vx) \geq \pi_{(L(\vx; \pi, \tau))}(\vx)\} & \text{otherwise}
    \end{cases}
    \label{eq:APS:S}    
\end{align}
i.e., the class at the $L(\vx; \pi, \tau)$ rank is included in the prediction set with probability $1 - V(\vx; \pi, \tau)$, where
\[
V(\vx; \pi, \tau) = \frac{1}{\pi_{(L(\vx; \pi, \tau))}(x)} \left\{ \left[\sum\limits_{j=1}^{L(\vx; \pi, \tau)}{\pi_{(j)}(x)} \right] - \tau \right\}
\]
The corresponding randomized prediction sets are $\gC_\alpha^{\text{or}}(\vx) = S(\vx, U; \pi, 1 - \alpha)$, $U \sim U(0, 1)$
Note, the coverage guarantees provided in conformal prediction hold only in expectation over the randomness in $(\vx_i, y_i), i = 1, \dots, n+1$.
The randomized prediction sets continue to provide the guarantee with additional randomness over $u_i$.
To make this work for a non-oracle probability $\hat{\pi}(\vx)$, they define a non-conformity score $A$
\begin{align}
    A(\vx, y, u;\hat{\pi}) = \min\{\tau \in [0, 1]: y \in S(\vx, u; \hat{\pi}, \tau)\}
    \label{eq:APS:score}
\end{align}

Assume that $\hat{\pi}$ are all distinct, for ease of defining rank.
Suppose the rank of the true class amongst the sorted $\hat{\pi}$ be $r_y$, i.e., $\sum\limits_{i=1}^K \1[\hat{\pi}_i(\vx) \geq \hat{\pi}_{y}] = r_y$
Solving for $\tau$ as a function of $\hat{\pi}$  (see Appendix~\ref{appx:APS:tau}, for proof),
\begin{align}
A(\vx, y, u;\hat{\pi}) = \left[ \sum\limits_{i=1}^{r_y} \hat{\pi}_{(i)}(\vx) \right] - u \hat{\pi}_{y}
\end{align}

Instead, if a deterministic set is used to define the conformal score instead (i.e., the randomized set construction is not carried out), then we could add the probabilities until the true class is included:
\begin{align}
    \Tilde{A}(\vx, y;\hat{\pi}) = \left[ \sum\limits_{i=1}^{r_y} \hat{\pi}_{(i)}(\vx) \right]
\end{align}
This version of APS still provides the same conditional coverage guarantees and has a simpler exposition, as the prediction sets are constructed by greedily including the classes until the true label is included.
Thus, this version is implemented in the popular monographs on conformal prediction~\citep{angelopoulos2021gentle}.
However, the lack of randomization may sacrifice efficiency. 
This modification of the score function affects both the quantile threshold computation during the calibration phase and the prediction set during the test phase.    
We will now show the conditions that impact the efficiency more formally.

For simplicity, assume that the probabilities are distinct.

From the definition of $A$ \eqref{eq:APS:score}
\begin{align*}
    A(\vx, y, u;\hat{\pi}) &= \min\{\tau \in [0, 1]: y \in S(\vx, u; \hat{\pi}, \tau)\}
\end{align*}
Define 
\[
\Sigma_{\hat{\pi}}(\vx, m) = \sum\limits_{i=1}^m \hat{\pi}_{(i)}(\vx)
\]
From the definition of $S(\vx, u; \hat{\pi}, \tau)$ from \eqref{eq:APS:S}, conisder the following cases:

\textbf{Case 1:} $\tau = \Sigma_{\hat{\pi}}(\vx, r_y)$, then $L(x; \hat{\pi}, \tau) = y$ and thus, $V(\vx; \pi, \tau) = 0$.
Thus $\Pr[u > V(\vx; \pi, \tau)] = 1$ and hence, $P[y \in S(\vx, u; \hat{\pi}, \tau)] = 1$.

\textbf{Case 2:} $\tau = \Sigma_{\hat{\pi}}(\vx, r_y-1)$, then $y \not\in S(\vx, u, \hat{\pi}, \tau)$ in either case, since only classes with $\hat{\pi}_i(\vx) > \hat{\pi}_y(\vx)$ could be included.

\textbf{Case 3:} $\tau = \Sigma_{\hat{\pi}}(\vx, r_y) - \varepsilon \hat{\pi}_y$.
Then we have $L(x; \hat{\pi}, \tau) = y$ again, and 
\begin{align*}
    V(\vx; \pi, \tau) &= \frac{1}{\hat{\pi}_y(\vx)}\left\{ \left[ \sum_{j=1}^{r_y} \hat{\pi}_{(j)}(\vx) \right] - \tau \right\} \\
                      &= \frac{1}{\hat{\pi}_y(\vx)}\left\{ \left[ \sum_{j=1}^{r_y} \hat{\pi}_{(j)}(\vx) \right] - (\Sigma_{\hat{\pi}}(\vx, r_y) - \varepsilon \hat{\pi}_y) \right\}\\
                      &= \varepsilon
\end{align*}
For $y$ to be included in $S(\vx, u; \hat{\pi}, \tau)$, we would require that $u \geq V(\vx; \pi, \tau)$, i.e., $u \geq \varepsilon$. 
We want the minimal $\tau$ (or the maximal $\varepsilon$). 
Thus, $\tau = \Sigma_{\hat{\pi}}(\vx, r_y) - u \hat{\pi}_y$ is the required solution.

\subsection{Non-randomized set}
The inclusion criterion for the score given the threshold $\tau$ is $\Tilde{A}(\vx, y; \hat{pi}) \leq \tau$

To include the correct label $y_i$ while minimizing the chosen threshold $\tau$, we would require $\tau = \sum\limits_{j=1}^{r_{y_i}} \hat{\pi}_{(j)}(\vx)$ 
\clearpage
\section{Proofs}

\label{app:proofs}
\subsection{Proof of Theorem \ref{them:APS:efficiency}}
For self-containment, we will restate some of the notation used in proving Theorem \ref{them:APS:efficiency}.
Let $A(\vx, y) $ be any non-conformity score function and 
\[
    \hat{q}_{A} = \text{Quantile}\left(\frac{\ceil{(n+1)(1-\alpha)}}{n}; \{A(\vx_i, y_i)\}_{i=1}^{n}\right)
\]
Define $A_i(y) := A(\vx_i, y)$ where $y\in\mathcal{Y}$ and $C_{A}^{i} = C_{A}(\vx_{i})$ for brevity. Let $\mathcal{Y}'_i = \mathcal{Y}\setminus \{y_i\}$ be the set of incorrect labels and $y'_i \in \mathcal{Y}'_i$ be any incorrect class label for each $\vx_i$. We can consider the exchangeable sequence $\{A(\vx_i,y^R_i) \}_{i=1}^{n+1}$, where $y^R_i$ denotes a randomly sampled from $\gY_i'$, and define $\alpha_c^A \in [0, 1]$ such that:
\begin{equation}
    \hat{q}_A = \text{Quantile}\parens{ \frac{\ceil{(n+1)(1 - \alpha_c^A)}}{n}; ~\{A(\vx_i,y^R_i)\}_{i=1}^{n} }.
\end{equation}

Particularly, if we set $\lambda = \hat{q}_A$ in the following lemma, then we can compute $\alpha_c^A = \frac{\sum_{i=1}^{n} \1{\brackets{A\parens{\vx_i, y_{i}^R} > \hat{q}_A}} + 1}{n+1}$ to satisfy the guarantee given in Equation \ref{eq:random_incorrect_coverage}.

\begin{restatable}[\cite{cf2025a}]{lemma}{lemCPInv}
    For $\lambda\in [0, 1]$ and $n = \abs{\calib}$, let $\gC_{\lambda}(\vx) = \{y\in\gY : s(\vx, y)\leq \lambda\}$. Then,
    \begin{eqnarray}
    \frac{\sum_{i=1}^{n} \1{[s(\vx_i, y_{i}) > \lambda}]}{n+1} \leq \Pr\brackets{y_{n + 1}\not\in \mathcal{C}_{\lambda}(\vx_{n+1})}\leq \frac{\sum_{i=1}^{n} \1{[s(\vx_i, y_{i}) > \lambda}]+1}{n+1}.
    \label{new:eq:cp_inv_statement}
    \end{eqnarray}
    \label{new:lem:cp_inv}
\end{restatable}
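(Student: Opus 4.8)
The plan is to reduce the miscoverage event to a statement about the pooled collection of non-conformity scores and then invoke exchangeability, exactly as in the standard proof of split-conformal coverage (Theorem~\ref{thm:CP:coverage}), but run in the opposite direction: instead of fixing a quantile level and deriving the threshold $\lambda$, we fix $\lambda$ and read off the induced miscoverage.

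First I would record the identity $\{y_{n+1}\notin\gC_\lambda(\vx_{n+1})\} = \{s(\vx_{n+1},y_{n+1})>\lambda\}$, which is immediate from the definition $\gC_\lambda(\vx)=\{y\in\gY: s(\vx,y)\le\lambda\}$; hence it suffices to sandwich $\Pr[s(\vx_{n+1},y_{n+1})>\lambda]$. Writing $S_i:=s(\vx_i,y_i)$ for $i=1,\dots,n+1$, the exchangeability hypothesis on $\{(\vx_i,y_i)\}_{i=1}^{n+1}$ makes $(S_1,\dots,S_{n+1})$ exchangeable. I would then condition on the unordered pooled multiset $\mathcal M=\{\!\{S_1,\dots,S_{n+1}\}\!\}$: by exchangeability the test coordinate $S_{n+1}$ is, conditionally on $\mathcal M$, a uniform draw from the $n+1$ pooled values, so $\Pr[S_{n+1}>\lambda\mid\mathcal M]=\tfrac{1}{n+1}\sum_{i=1}^{n+1}\1[S_i>\lambda]$. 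Setting $N:=\sum_{i=1}^{n}\1[S_i>\lambda]$ for the calibration count and using $\sum_{i=1}^{n+1}\1[S_i>\lambda]=N+\1[S_{n+1}>\lambda]$ with $\1[S_{n+1}>\lambda]\in\{0,1\}$ yields $N\le\sum_{i=1}^{n+1}\1[S_i>\lambda]\le N+1$, whence $\tfrac{N}{n+1}\le\Pr[S_{n+1}>\lambda\mid\mathcal M]\le\tfrac{N+1}{n+1}$; this is exactly \eqref{new:eq:cp_inv_statement} after identifying $N$ with $\sum_{i=1}^n\1[s(\vx_i,y_i)>\lambda]$.

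The step I expect to be the main obstacle is not a computation but getting the probabilistic bookkeeping right: $N$ is a function of the random calibration sample yet appears on both sides of an inequality whose middle term is a probability, so one has to be precise about which $\sigma$-algebra the probability is conditioned on. Conditioning on the pooled multiset $\mathcal M$ is what makes the statement coherent: given $\mathcal M$, the only residual randomness in $N$ is whether the test point is among the $M:=\sum_{i=1}^{n+1}\1[S_i>\lambda]$ pooled points with score above $\lambda$, and both admissible values $N\in\{M-1,M\}$ obey $N\le M\le N+1$ deterministically, so the sandwich holds surely. I would also note that no distinctness or tie-breaking assumption on the scores is required, since the argument uses indicator counts rather than order statistics; the only point a careful reader might want spelled out is the uniform-draw claim for $S_{n+1}$ given $\mathcal M$ when score values repeat, which follows directly from permutation invariance of the law of $(S_1,\dots,S_{n+1})$.
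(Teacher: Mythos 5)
Your proof is correct, and it supplies an argument that the paper itself does not give: the paper imports this lemma from \citet{cf2025a} without proof and only uses it to read off $\alpha_c^A$ from the calibration scores. Your route --- reduce miscoverage to the event $\{s(\vx_{n+1},y_{n+1})>\lambda\}$, condition on the pooled multiset $\mathcal M$ of all $n+1$ scores so that the test score is exchangeable with (hence uniform over) the pooled values, and then note that the calibration count $N=\sum_{i=1}^{n}\1[s(\vx_i,y_i)>\lambda]$ and the pooled count $M$ satisfy $N\le M\le N+1$ surely --- is the standard quantile-lemma argument run in reverse, and you are right that counting indicators rather than ranks makes any distinctness or tie-breaking assumption unnecessary. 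The point you isolate as delicate is indeed the one that must be made explicit: with $N$ random, the displayed sandwich is only coherent as an almost-sure statement about the conditional probability given $\mathcal M$ (equivalently, after taking expectations, as $\E[N]/(n+1)\le\Pr[y_{n+1}\notin\gC_\lambda(\vx_{n+1})]\le(\E[N]+1)/(n+1)$), and it is genuinely false if the middle term is read as the probability conditional on the calibration sample alone (for $n=2$ i.i.d.\ uniform scores and $\lambda=1/2$, the event $N=0$ has probability $1/4$ and then the upper bound $1/3$ falls below the true conditional miscoverage $1/2$). One caveat beyond the lemma itself: your argument, like the statement, takes $\lambda$ fixed, whereas the paper applies the lemma with $\lambda=\hat q_A$, a functional of the calibration data; covering that use requires conditioning on the multiset of score \emph{pairs} rather than of single scores, which is worth flagging if your proof is meant to support the downstream application in Theorem~\ref{them:APS:efficiency}.
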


\paragraph{On the exchangeability of $\bm{\{A(x_{i},y_i)\}_{i=1}^{n+1}}$} To understand why $\{A(\vx_i,y^R_i) \}_{i=1}^{n+1}$ is exchangeable we first consider Lemma \ref{lem:rankUniform}:

\begin{lemma}
The random variable $y^R_i \sim U(\mathcal{Y}'_i)$. 
\label{lem:rankUniform}
\end{lemma}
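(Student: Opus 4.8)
The plan is to read off the distribution of $y^R_i$ directly from the way it is generated: $y^R_i$ is produced by sampling a position uniformly at random among the $K-1$ ranks not occupied by the true label $y_i$, and then returning the label that sits at that position. So the whole statement reduces to the elementary fact that pushing a uniform law forward through a bijection yields a uniform law.

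First I would fix notation and make the construction explicit. Condition on $(\vx_i,y_i)$. Sorting the base-model probabilities $\hat{\pi}(\vx_i)$ in descending order — breaking any ties by a fixed deterministic rule, e.g.\ by label index — gives a bijection $\rho_i:\{1,\dots,K\}\to\gY$ sending a rank $r$ to the label occupying rank $r$. Let $r_{y_i}=\rho_i^{-1}(y_i)$ be the rank of the true label. The construction draws an auxiliary variable $R_i$ uniformly on the $(K-1)$-element set $\{1,\dots,K\}\setminus\{r_{y_i}\}$, independently of everything else, and sets $y^R_i=\rho_i(R_i)$. Since $\rho_i$ is a bijection with $\rho_i(r_{y_i})=y_i$, its restriction $\rho_i:\{1,\dots,K\}\setminus\{r_{y_i}\}\to\gY\setminus\{y_i\}=\gY'_i$ is again a bijection between two sets of cardinality $K-1$.

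Next I would carry out the one-line change of variables: for any $z\in\gY'_i$,
\[
\Pr[\,y^R_i=z\mid\vx_i,y_i\,]=\Pr[\,R_i=\rho_i^{-1}(z)\,]=\frac{1}{K-1},
\]
using independence of $R_i$ from $(\vx_i,y_i)$ and the uniformity of $R_i$ on $\{1,\dots,K\}\setminus\{r_{y_i}\}$. Hence, conditionally on $(\vx_i,y_i)$, $y^R_i$ is uniform on $\gY'_i$; since this conditional law depends on the conditioning only through which single label is removed, the same holds unconditionally, which is exactly $y^R_i\sim U(\gY'_i)$. (If one instead prefers to define $y^R_i$ as a direct uniform draw over $\gY'_i$, the statement is immediate, and the argument above merely verifies that the rank-based realization used in the downstream exchangeability proof produces that same law.)

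The part that needs the most care — not the most computation — is the bookkeeping around ties and independence: one must commit to a deterministic tie-breaking rule so that $\rho_i$ is genuinely a bijection for every realization of $\vx_i$, and one must record that the auxiliary draws are mutually independent and independent of $\{(\vx_i,y_i)\}_{i=1}^{n+1}$, since it is precisely this structure — a fixed measurable map applied coordinatewise to an exchangeable sequence augmented with i.i.d.\ noise — that the subsequent argument invokes to conclude that $\{A(\vx_i,y^R_i)\}_{i=1}^{n+1}$ is exchangeable. Everything beyond that is the change of variables above.
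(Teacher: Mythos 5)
Your proof is correct and is essentially the paper's argument: both reduce the claim to the fact that a uniform draw over the $K-1$ ranks, pushed through the realization-dependent bijection from ranks to incorrect labels, is uniform on $\gY'_i$. The paper phrases this unconditionally via the law of total probability (its ``sum of a PMF'' step is exactly the bijectivity you make explicit), while you condition on $(\vx_i,y_i)$ and do a one-line change of variables; the difference is purely presentational.
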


\begin{proof}
To prove the lemma, we will show that the PMF that $y'_{i,(u_i)}$ follows is identical to $U(\mathcal{Y}'_i)$. Let $y'_i$ be an arbitrary label in $\mathcal{Y}'_i$. First observe for $y^R_i \sim U(\mathcal{Y}'_i)$ that $\Pr[y^R_i = y'_i] = \frac{1}{K-1}$ because it is uniformly sampled. Now consider via the law of total probability,
\begin{align}
    \Pr[y'_{i,(u_i)} = y'_i] &= \sum_{l=1}^{K-1}\Pr[y'_{i,(u_i)} = y'_i | u_i=l]\cdot\Pr[u_i=l] \nonumber\\
    &= \sum_{l=1}^{K-1}\Pr[y'_{i,(l)} = y'_i]\cdot\Pr[u_i=l] \nonumber\\
    &= \sum_{l=1}^{K-1}\Pr[y'_{i,(l)} = y'_i]\frac{1}{K-1} \nonumber\\
    &= \frac{1}{K-1}\underbrace{\sum_{l=1}^{K-1}\Pr[y'_{i,(l)} = y'_i]}_\text{Sum of a PMF} \nonumber\\
    &= \frac{1}{K-1}\cdot 1 = \frac{1}{K-1} \nonumber\\
\end{align}

Thus, the PMFs are identical and $y^R_i \sim U(\mathcal{Y}'_i)$.
\end{proof}

Recall ${(\vx_i,y_i)}_{i=1}^{n+1}$ are assumed to be exchangeable, thus ${(\vx_i,\mathcal{Y}'_i)}_{i=1}^{n+1}$ must also be exchangeable since $\mathcal{Y}'_i$ is the complement of $\{y_i\}$. Lastly, consider an IID sequence $u_i \sim U(\{1,\hdots,K-1\})$ for $i=1, \dots, n+1$. Since $\{u_i\}_{i=1}^{n+1}$ is IID, we can adjoin it to the covariates and produce an exchangeable sequence of triples ${(\vx_i,\mathcal{Y}_i', u_i)}_{i=1}^{n+1}$. 

From this exchangeable sequence, we can compute the non-conformity scores, and ${(\{A(\vx_i, y'_i) \mid y'_i\in \gY'_i\}, u_i)}_{i=1}^{n+1}$. Lastly, to get a pointwise representative for each set, let $y^R_i =  {y'_{i}}_{(u_i)}$ such that $A(\vx_i, {y'_{i}}_{(1)}) \leq A(\vx_i, {y'_{i}}_{(2)}) \leq \cdots \leq A(\vx_i, {y'_{i}}_{(K-1)})$ 
-- assuming there is a suitably random tie-breaking method for identical scores. Then, $\{A(\vx_i,y^R_i) \}_{i=1}^{n+1}$ is an exchangeable sequence. In practice, the process of ordering the $\gY_i'$ labels is not necessary. However, it concretely demonstrates the procedure to form $\{A(\vx_i,y^R_i)\}_{i=1}^{n+1}$ ensures the sequence is exchangeable where $y^R_i \sim U(\mathcal{Y}'_i)$ using Lemma \ref{lem:rankUniform}.

Now, we will prove Theorem \ref{them:APS:efficiency}.

\apsEff*
\begin{proof}
Let $\gC_{A}^{i} = \gC_{A}(\vx_{i})$ for brevity. First consider, $\E\brackets{\abs{\gC_{A}^{n+1}}}$,
\begin{align}
    \E\brackets{\abs{\gC_{A}^{n+1}}} &= \E\left[\sum\limits_{y \in \mathcal{Y}} \1[y \in \gC_{A}^{n+1}]\right] \nonumber\\
     &= \E\left[\1[y_{n+1} \in \gC_{A}^{n+1}]\right] + \E\left[ \sum\limits_{y'_{n+1}\in \mathcal{Y}'_{n+1}}\1[y'_{n+1} \in \gC_{A}^{n+1}]\right] \nonumber\\
     &= \E\left[\1[y_{n+1} \in \gC_{A}^{n+1}]\right] + \sum\limits_{y'_{n+1}\in \mathcal{Y}'_{n+1}}\E\left[\1[y'_{n+1} \in \gC_{A}^{n+1}]\right] \nonumber\\
    &= \Pr[y_{n + 1} \in \gC_{A}^{n+1}] + \sum\limits_{y'_{n+1}\in \mathcal{Y}'_{n+1}} \Pr[y'_{n+1} \in \gC_{A}^{n+1}] \label{eq:thm3_1:set_size}
\end{align}

Now using Lemma \ref{lem:rankUniform} we have that $y^R_i \sim U(\mathcal{Y}'_i)$.
Then, using the law of total probability and the fact that a label $y$ is in $\gC_A^{n + 1}$ iff $A(\vx_{n + 1}, y)\leq \hat{q}_A$, 
\begin{align}
    \Pr[A(\vx_{n+1},y^R_{n+1}) \leq \hat{q}_A] &= \sum\limits_{y_{n + 1}'\in \mathcal{Y}'_{n+1}}\Pr[A(\vx_{n+1},y^R_{n+1}) \leq \hat{q}_A|y^R_{n+1} = y'_{n+1}]\cdot\Pr[y^R_{n+1} = y'_{n+1}]\nonumber\\
    &= \sum\limits_{y_{n + 1}'\in \mathcal{Y}'_{n+1}}\Pr[A(\vx_{n+1}, y'_{n+1}) \leq \hat{q}_A]\cdot\Pr[y^R_{n+1} = y'_{n+1}]\nonumber\\
    &= \sum\limits_{y_{n + 1}'\in \mathcal{Y}'_{n+1}}\Pr[y'_{n+1} \in \gC_{A}^{n+1}]\cdot\Pr[y^R_{n+1} = y'_{n+1}]\nonumber\\
    &= \sum\limits_{y_{n + 1}'\in \mathcal{Y}'_{n+1}}\Pr[y'_{n+1} \in \gC_{A}^{n+1}]\cdot\frac{1}{K-1}\nonumber\\
    \iff & (K-1)\Pr[y^R_{n+1} \in \gC_{A}^{n+1}] = \sum\limits_{y_{n + 1}'\in \mathcal{Y}'_{n+1}}\Pr[y'_{n+1} \in \gC_{A}^{n+1}] \label{eq:thm3_1:total_prob}
\end{align}
Then substituting Equation \ref{eq:thm3_1:total_prob} into Equation \ref{eq:thm3_1:set_size} we arrive at,
\begin{equation}
    \E\brackets{\abs{\gC_{A}^{n+1}}} = \Pr[y_{n + 1} \in \gC_{A}^{n+1}] + (K-1)\Pr[y^R_{n+1} \in \gC_{A}^{n+1}]
\end{equation}

Then we can compute an upper bound and lower bound for $\E\brackets{\abs{\gC_{A}^{n+1}}} $, 
\begin{align*}
    \E\brackets{\abs{\gC_{A}^{n+1}}}&\leq 1 - \alpha + \frac{1}{n+1} + (K-1)\left(1 - \alpha_c^A + \frac{1}{n+1}\right) \\
     &= 1 - \alpha + (K-1)\left( 1 - \alpha_c^A\right) + \frac{K}{n+1}
\end{align*}
and 
\begin{equation}
        \E\brackets{\abs{\gC_{A}^{n+1}}} \geq 1 - \alpha + (K-1)\left(1 - \alpha_c^A\right)
\end{equation}

similar bounds can be derived for $\E\brackets{\abs{\gC_{\Tilde{A}}^{n+1}}}$.
Thus, 
\begin{align}
    \E\left[|\gC_{\Tilde{A}}^{n+1}| - |\gC_{A}^{n+1}|\right] &\geq \underbrace{\parens{1 - \alpha + (K-1)\left(1 - \alpha_c^{\Tilde A}\right)}}_\text{lower bound for $\E\brackets{\abs{\gC_{\Tilde{A}}^{n+1}}}$} - \underbrace{\parens{1 - \alpha + (K-1)\left( 1 - \alpha_c^{A}\right) + \frac{K}{n+1}}}_\text{upper bound for $\E\brackets{\abs{\gC_{{A}}^{n+1}}}$}\nonumber\\
    &= (K-1)\left(\alpha_c^A - \alpha_c^{\Tilde{A}}\right) - \frac{K}{n+1} \nonumber\\
    &=(K-1) \left(\alpha_c^A - \alpha_c^{\Tilde{A}} - \frac{K}{(K - 1)n + 1}\right) \nonumber\\
    & \geq (K-1)\left(\alpha_c^A - \alpha_c^{\Tilde{A}} - \frac{2}{n+1}\right) \geq 0 \label{eq:thm3-1:final_result}
\end{align}
with the last inequality holding because $\alpha_c^A - \alpha_c^{\Tilde{A}} \geq \frac{2}{n+1}$. This completes the proof for the general case.
\end{proof}
\clearpage
\subsection{Proof of Corollary \ref{cor:eff_gain}}
\effGain*
\begin{proof}
    Observe using Equation \ref{eq:thm3-1:final_result}, $\E\left[|\gC_{\Tilde{A}}^{n+1}| - |\gC_{A}^{n+1}|\right] \geq (K-1)\left(\alpha_c^A - \alpha_c^{\Tilde{A}} - \frac{2}{n+1}\right)$. Similarly by swapping $A$ and $\Tilde A$: 
    \begin{align*}
    & \E\left[|\gC_{{A}}^{n+1}| - |\gC_{\Tilde{A}}^{n+1}|\right] \geq (K-1)\left(\alpha_c^{\Tilde{A}} - \alpha_c^{{A}} - \frac{2}{n+1}\right) \\
    \iff & \E\left[|\gC_{\Tilde{A}}^{n+1}| - |\gC_{A}^{n+1}|\right] \leq (K-1)\left(\alpha_c^A - \alpha_c^{\Tilde{A}} + \frac{2}{n+1}\right)
    \end{align*}
    Then observe as $n \to \infty$ both the lower and upper bound of $\E\left[|\gC_{\Tilde{A}}^{n+1}| - |\gC_{A}^{n+1}|\right]$ converge to $(K-1)\left(\alpha_c^A - \alpha_c^{\Tilde{A}}\right)$. Thus $\E\left[|\gC_{\Tilde{A}}^{n+1}| - |\gC_{A}^{n+1}|\right] = (K-1) \left(\alpha_c^A - \alpha_c^{\Tilde{A}}\right)$.
\end{proof}
\clearpage
\section{Method Details and Innovations}\label{appx:more_methods}
\subsection{Diffusion Adaptive Prediction Sets (DAPS)}
\label{apps:methods:daps}
The Diffusion Adaptive Prediction Sets (DAPS) approach for conformal node classification on graphs was introduced by \citep{zargarbashi23conformal}.
The intuition behind DAPS follows the prevalence of homophily graphs, which suggests that non-conformity scores for two connected nodes should be related. 
DAPS uses a diffusion step to capture this relationship and uses the non-conformity scores modified by diffusion to generate the prediction sets.
Formally, suppose $s(v, y)$ is a point wise non-conformity score for a node $v$ and label $y$ (e.g., TPS or APS)
\[
    \hat{s}(v, y) = (1 - \lambda) s(v, y) + \frac{\lambda}{|
    \gN_v|} \sum\limits_{u \in \gN_v} s(u, y)
\]
where $\gN_v$ is the 1-hop neighborhood of $v$ and $\lambda \in [0, 1]$ is a hyperparameter controlling the diffusion.

\citet{zargarbashi23conformal} uses the APS score as the point-wise score in the diffusion process, as it is adaptive and uniformly distributed in $[0, 1]$ under oracle probability.
However, as noted earlier, using classwise thresholds provides a mechanism to produce adaptive scores from TPS.
Thus, we create DTPS, a variation of DAPS using TPS scores as the point-wise scores in the diffusion process.

We compare our proposed method of using diffusion on top of TPS-Classwise (DTPS) against DAPS, which was proposed by~\citet{zargarbashi23conformal}.
From Figure~\ref{fig:conformal:daps_vs_dtps} (left), we see that DTPS can be competitive with DAPS in efficiency while providing better label-stratified coverage.
However, for some of the larger datasets (Cora, Flickr, ogbn-arxiv, ogbn-products), DTPS suffers from poorer efficiency compared to DAPS.
This can be partially explained by the worse performance of the pre-diffusion TPS-classwise (Figure~\ref{fig:conformal:efficiency:tps-vs-classwise}), which is forced to sacrifice efficiency on these datasets to achieve label-stratified coverage (Figure~\ref{fig:conformal:tps_all}).

However, when we control the number of samples per class with LC splits (Figure~\ref{fig:conformal:daps_vs_dtps} right), we see that DTPS label stratified coverage deteriorates significantly compared to DAPS.
Based on these results, we can conclude that DTPS is not a universally better method than DAPS, and its performance is sensitive to the calibration set size and the number of classes.
It may be a viable candidate over DAPS in scenarios when there is a sufficiently large calibration set, which is when TPS-classwise has competitive efficiency to TPS.

\begin{figure}
    \centering
    \includegraphics[width=0.77\linewidth]{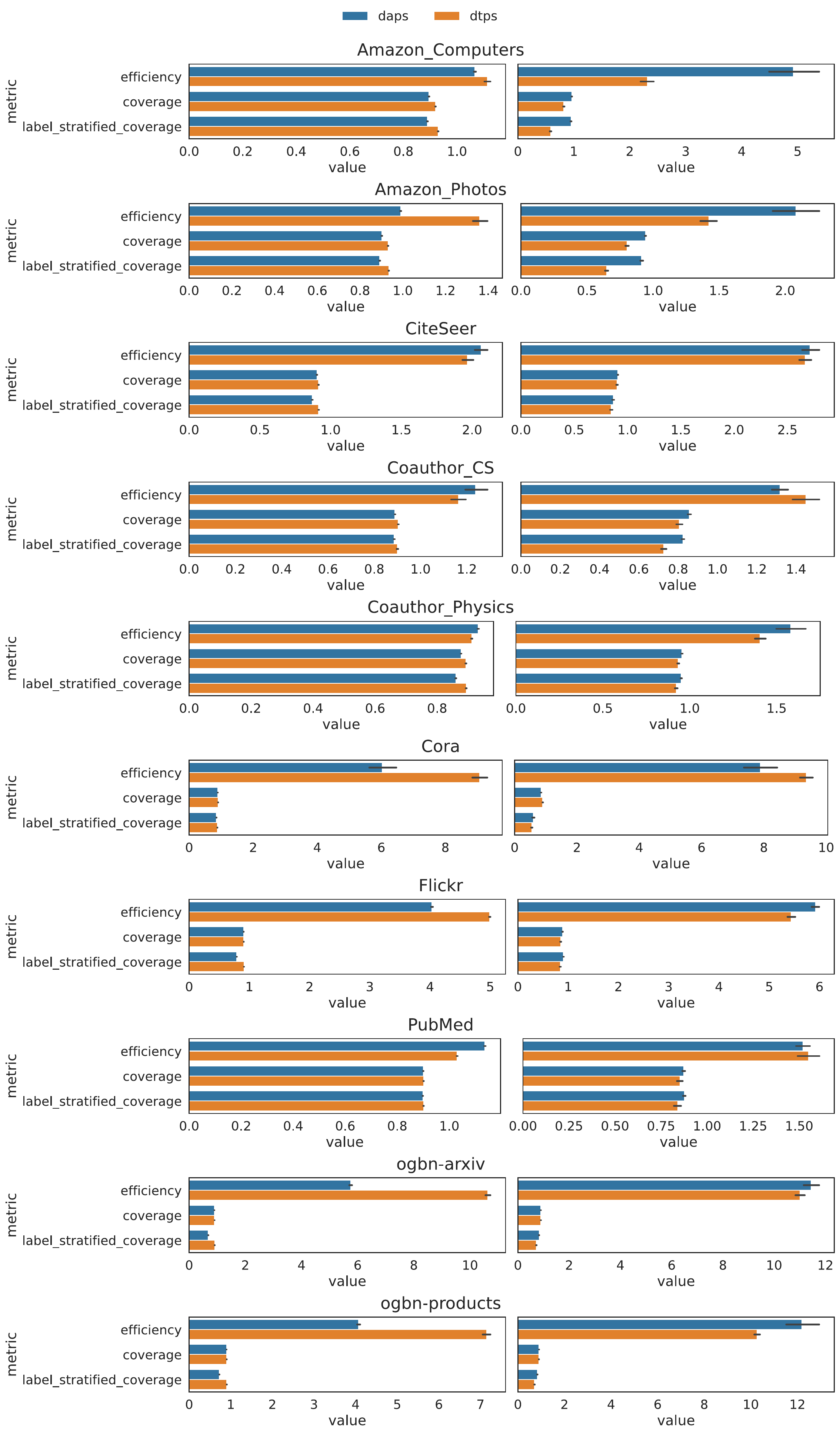}
    \caption{Bar charts denoting different metrics associated with DAPS and DTPS across all datasets for FS split (left) and LC split (right) at $\alpha=0.1$.}
    \label{fig:conformal:daps_vs_dtps}
\end{figure}
\subsection{Notes on Transductive NAPS} \label{apps:methods:naps}
Neighborhood Adaptive Prediction Sets (NAPS) constructs prediction sets under relaxed exchangeability (or non-exchangeability) assumptions~\citep{barber2023conformal} and was initially implemented for the inductive setting~\citep{clarkson2023distribution}. However, NAPS can also be used in the transductive setting ~\citep{zargarbashi23conformal}. 
To compute scores for $\gD_{\text{calib}}$ nodes, NAPS uses APS. 
Using these scores, Equation~\ref{eq:NAPS:quantile} is used to compute a \textit{weighted quantile} for the score threshold to be used when constructing the prediction sets. The weighted quantile is defined by placing a point mass, $\delta_{s_i}$, for each calibration point's score, $s_i$, as well as a point mass at $\delta_{+\infty}$ to represent the test point, $v_{n + 1}$'s, score. The point mass at $\delta_{+\infty}$ is needed because the score for $v_{n + 1}$ is unknown, and potentially unbounded, due to non-exchangeability.
\begin{align}
    \hat{q}^{\text{NAPS}}_{n+1} = \text{Quantile}\parens{1-\alpha, \brackets{\sum_{i\in\gD_{\text{calib}}}\Tilde{w}_{i}\cdot \delta_{s_{i}}} + \Tilde{w}_{n+1}\cdot \delta_{+\infty}}
    \label{eq:NAPS:quantile}
\end{align}
For NAPS to produce viable prediction sets, the weights, $w_i\in [0,1]$, for the calibration nodes must be chosen in a data-independent fashion, i.e., they cannot leverage the associated node features~\citep{barber2023conformal}.
NAPS leverages the graph structure to assign these weights, assigning non-zero weights to nodes within a $k$-hop neighborhood, $\gN_{n+1}^k$, of a test node $v_{n+1}$. For nodes that are in $\gN_{n+1}^k$, let $d_i$ be the distance from the node to $v_{n + 1}$ to $v_i\in \gV_{\text{calib}}$. The three implemented weight functions are \textit{uniform}: $w_u(d_i) = \1[d_i \leq k]$, \textit{hyperbolic}: $w_h(d_i) = \frac{1}{d_i}\1[d_i \leq k]$, and \textit{exponential}: $w_e(d_i) = 2^{-d_i}\1[d_i \leq k]$. The weights are then normalized, $\Tilde{w}_i$,  such that $\sum_{i\in\gD_{\text{calib}}} \Tilde{w}_i + \Tilde{w}_{n+1} = 1$ \citep{barber2023conformal}.

\noindent \textbf{NAPS Implementation}
NAPS is computationally expensive in terms of time and memory since the $k$-hop intersection is computed for each test node.
To allow for scalability, our implementation of NAPS, shown in Algorithms \ref{alg:NAPS:Quantile} and \ref{alg:NAPS:SparseKHop}, uses batching to ensure sufficient memory is available and uses sparse-tensor multiplication to reduce memory and time costs. 

To ensure scalability for large graphs, all the computations until the quantile computation step were done via sparse tensors. 
Algorithm \ref{alg:NAPS:SparseKHop} illustrates how the distance to each calibration node in the $k$-hop neighborhood can be computed via sparse tensor primitives.

\begin{algorithm}[h!]
\caption{NAPS Quantile Implementation}\label{alg:NAPS:Quantile}
\begin{algorithmic}[1]
\Procedure{NAPS\_Quantile}{$w,k,\calib,\test,\mathcal{D},\mathcal{S}_{\text{calib}},b,\alpha$}
    \State $\{\mathcal{B}_1,\mathcal{B}_2,\hdots,\mathcal{B}_b\}\gets \Call{\text{split}}{\test,b}$ \Comment{Split test nodes into b batches}
    \State $q\gets \Call{\text{zeros}}{\test,1}$\Comment{$q\in \mathbb{R}^{|\test| \times 1}$}
    \For{$\mathcal{B}_n \in \{\mathcal{B}_1,\mathcal{B}_2,\hdots,\mathcal{B}_b\}$}
        \State $\text{k\_hop} \gets \Call{\text{Sparse\_k\_hop}}{k,\mathcal{B}_n,\calib,\mathcal{D}}$\Comment{k\_hop $\in \mathbb{R}^{|\mathcal{B}_n|\times|\calib|}$}
        \State $\text{weights}\gets \Call{\text{compute\_weights}}{w,\text{k\_hop}}$\Comment{weights $\in \mathbb{R}^{|\mathcal{B}_n|\times|\calib|}$}
        \State $q[\mathcal{B}_n]\gets \Call{\text{compute\_quantile}}{1-\alpha,\text{weights},\mathcal{S}_{\text{calib}}}$
    \EndFor\label{NAPSquantileendwhile}
    \State \textbf{return} $q$\Comment{Return the quantiles for each test node}
\EndProcedure
\end{algorithmic}
\end{algorithm}

\begin{algorithm}[H]
\caption{Sparse K Hop Neighborhood Implementation}\label{alg:NAPS:SparseKHop}
\begin{algorithmic}[1]
\Procedure{Sparse\_k\_hop}{$k,\mathcal{B},\calib,\mathcal{D}$}
    \State $\text{A}\gets \Call{\text{Get\_Adjacency}}{\mathcal{D}}$ \Comment{Adjacency of $\mathcal{D}$, A $\in \mathbb{R}^{|\mathcal{D}|\times|\mathcal{D}|}$}
    \State $\text{path\_n}\gets \text{A[$\mathcal{B},:$]}$\Comment{path\_n $\in \mathbb{R}^{|\mathcal{B}|\times|\mathcal{D}|}$ }
    \State $\text{k\_hop}\gets \text{path\_n[$:,\calib$]}$\Comment{k\_hop $\in \mathbb{R}^{|\mathcal{B}|\times|\calib|}$ }
    \For{$n \in \{2,3,\hdots,k\}$}
        \State $\text{path\_n} \gets (\text{path\_n})\text{A}$
        \State $\text{neg\_if\_n}\gets \text{k\_hop}-\Call{\text{sgn}}{\text{path\_n[$:,\calib$]}}$\Comment{negative value $\implies$ n hops away}
        \State $\text{in\_n\_hop}\gets (\text{neg\_if\_n}<0)\times n$ \Comment{Nodes that are a min distance of n}
        \State $\text{k\_hop} \gets \text{k\_hop} + \text{in\_n\_hop}$
    \EndFor\label{khopendwhile}
    \State \textbf{return} $\text{k\_hop}$\Comment{$\forall_{i,j}\, \text{\textbf{If} dist($i,j$)}  \leq k \text{ \textbf{then} k\_hop[$i,j$]} = \text{dist}(i,j), \text{\textbf{else} k\_hop[$i,j$]}=0$}
\EndProcedure
\end{algorithmic}
\end{algorithm}

\noindent\textbf{Parameter Analysis:} Apart from the particular weighting function, the main parameter in the NAPS algorithm is the number of hops to consider, $k$. Figure \ref{fig:naps} shows the trend between efficiency and coverage as we increase $k$ from $1$ to $D$, where $D$ is a lower bound on the diameter of the largest strongly connected component of the dataset, computed using the NetworkX \cite{networkX}. For each dataset, we observe a value of $k$ after which the efficiency does not improve, while still achieving the desired coverage. This behavior can suggest a heuristic akin to the `elbow method' used in clustering analysis to determine the number of clusters for choosing the value of $k$ in NAPS.
\clearpage
\begin{figure}
    \centering
    \includegraphics[width=0.9\linewidth]{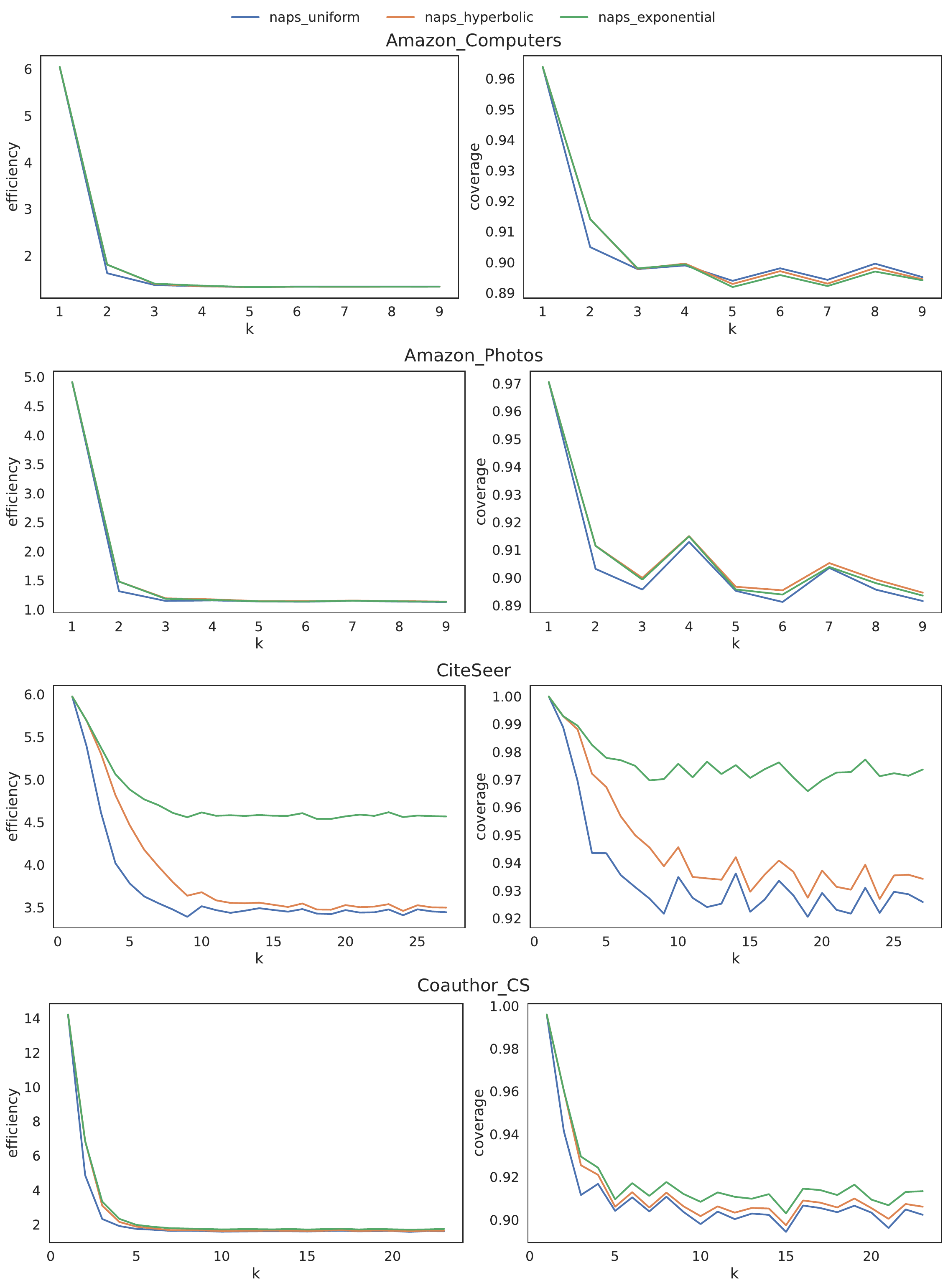}
    \caption{Plotting the Efficiency and Coverage when using NAPS for $k$ from $1$ to $D$. The above results are with FS split and $\alpha = 0.1$.}
    \label{fig:naps}
\end{figure}
\begin{figure}
    \ContinuedFloat
    \centering
    \includegraphics[width=0.8\linewidth]{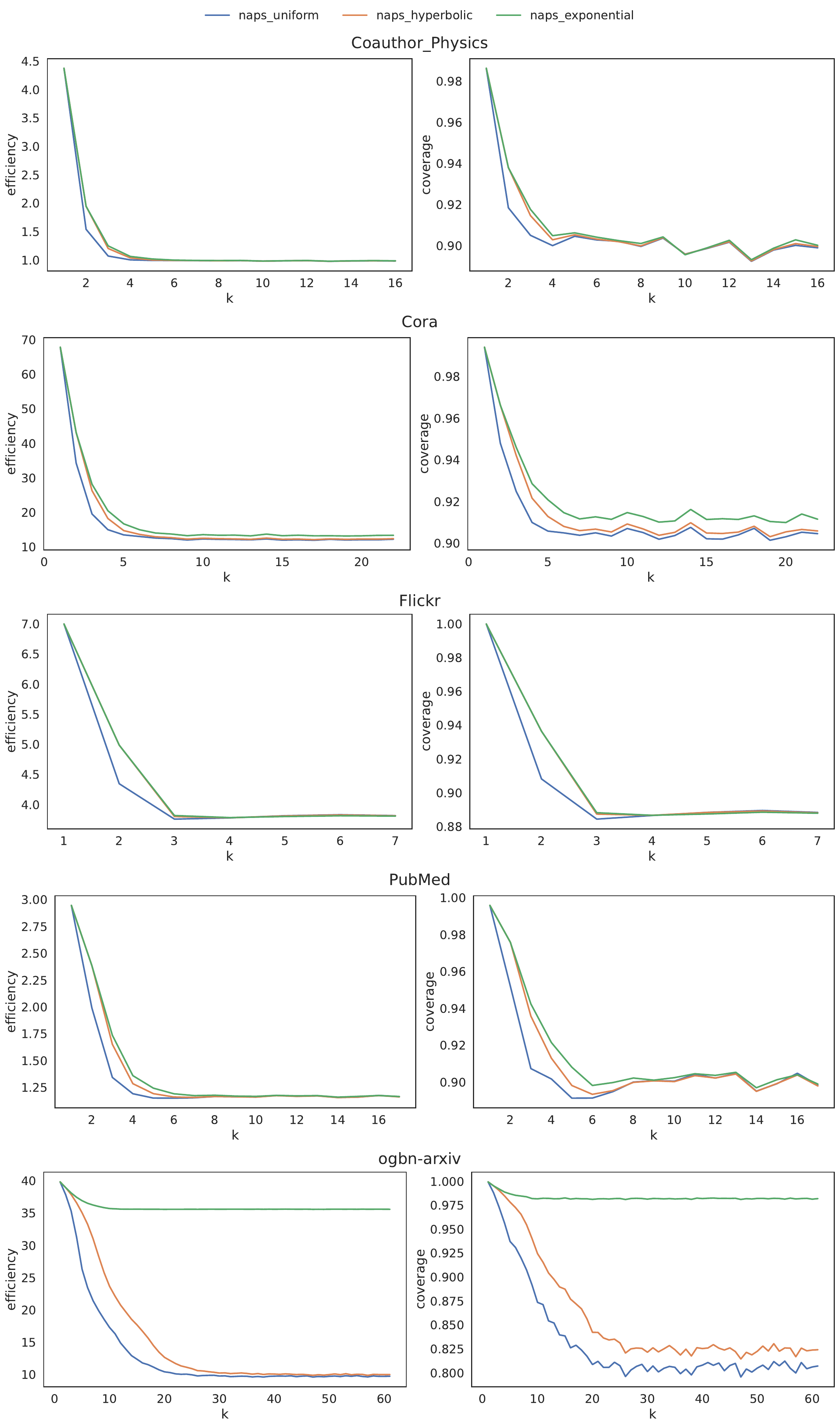}
    \caption{Plotting the Efficiency and Coverage when using NAPS for $k$ from $1$ to $D$. The above results are with FS split and $\alpha=0.1$ (cont.). The ogbn-products dataset is omitted due to size and lack of data points.}
\end{figure}
\subsection{Conformalized GNN}
\label{apps:methods:more_cfgnn}
\begin{figure}[H]
    \centering
    \includegraphics[width=\linewidth]{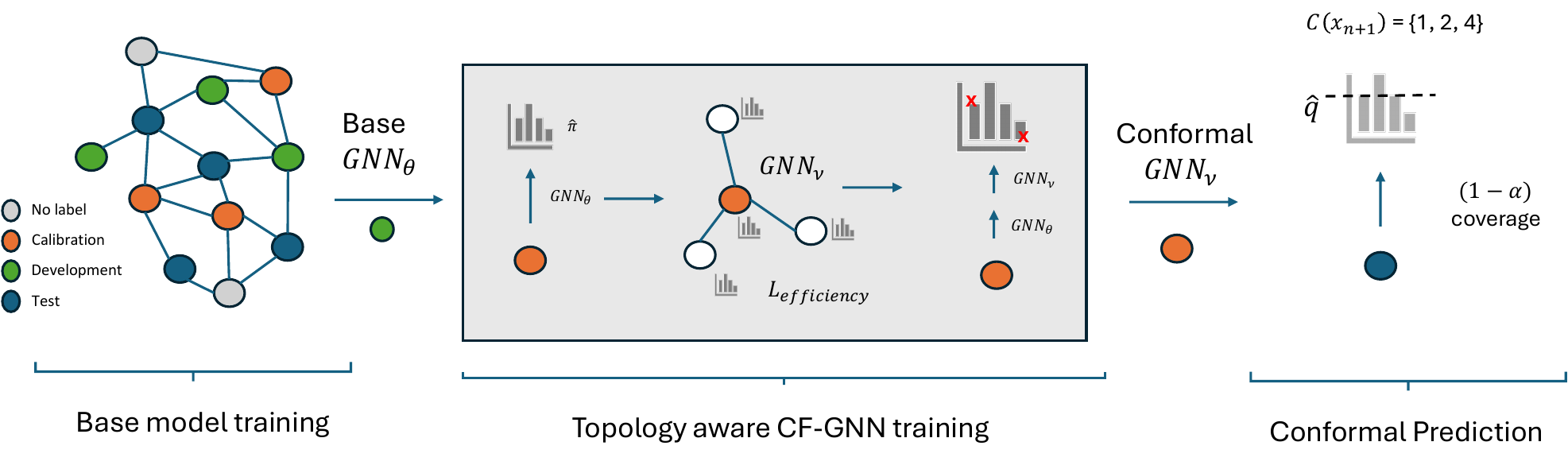}
    \caption{Procedure for training CFGNN. First (left), the base model is trained on the training set. Then, (middle) CFGNN is trained to maximize efficiency over the calibration set. Finally, (right) the non-conformity scores from the combined models are used to generate the prediction sets.}
    \label{fig:conformalized_gnn}
\end{figure}

\citet{huang2024uncertainty} introduces CFGNN as a conformal prediction method specific for graphs. Figure \ref{fig:conformalized_gnn} shows the end-to-end CFGNN procedure split into three steps. The first is to train a base GNN model, $GNN_{\theta}$, on the development nodes, $\mathcal{V}_{\text{dev}}$, normally using a label-based loss function such as Cross Entropy Loss. Using the outputs of $GNN_{\theta}$ as inputs, a second GNN, $GNN_{\varphi}$ is trained using an efficiency-based loss function proposed by \citet{huang2024uncertainty}. Equation \ref{eq:conformal_loss} presents the efficiency-based loss function for node classification, where $\sigma$ is the sigmoid function and $\tau$ is a temperature hyperparameter. The calibration nodes, $\mathcal{V}_{\text{calib}}$, are split into two sets, $\mathcal{V}_{\text{cor-cal}}$ and $\mathcal{V}_{\text{cor-test}}$ for training and validation, respectively. The fully trained $GNN_{\varphi}$ is then used for the conformal prediction and prediction set construction.
\begin{align}
    &\hat{\eta} = \text{DiffQuantile}\parens{\braces{s(\vx_i, y_i)}, (1 - \alpha)(1 + 1/\abs{\mathcal{V}_{\text{cor-cal}}})}\nonumber \\
    &\mathcal{L}_{eff} = \frac{1}{\abs{\mathcal{V}_{\text{cor-cal}}}}\sum_{i\in \mathcal{V}_{\text{cor-cal}}}\sum_{k\in\gY} \sigma\parens{\frac{s(\vx_i, k)-\hat{\eta}}{\tau}}
    \label{eq:conformal_loss}
\end{align}

\noindent\textbf{Impact of Inefficiency Loss:} Figure \ref{fig:conformal:cfgnn_aps_vs_orig_app} compares the efficiency of `cfgnn\_aps', `cfgnn\_orig,' and `aps\_randomized' for all the other datasets. For the FS split, we see that `cfgnn\_aps' improves upon `cfgnn\_orig' for all datasets and can improve upon `aps\_randomized' for most datasets. 
We see that CFGNN is still quite brittle for all datasets for the LC Split. CFGNN can still improve upon `aps\_randomized' for datasets with a larger number of classes, as seen with Cora and ogbn-products.

\begin{figure}[h!]
    \centering
    \begin{subfigure}{0.5\textwidth}
      \includegraphics[width=\linewidth]{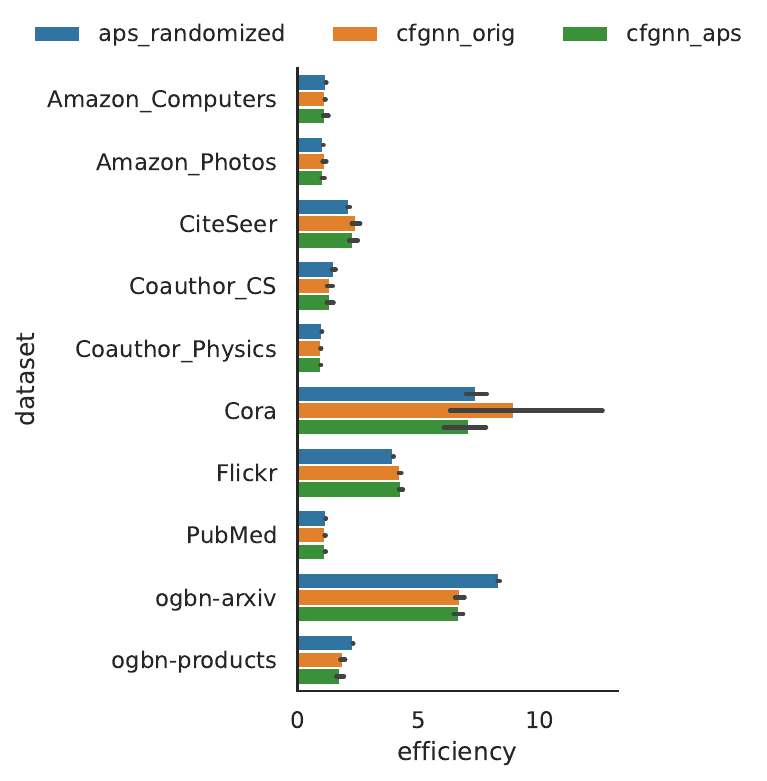}
      \caption{FS split style}
    \end{subfigure}%
    \begin{subfigure}{0.5\textwidth}
      \includegraphics[width=\linewidth]{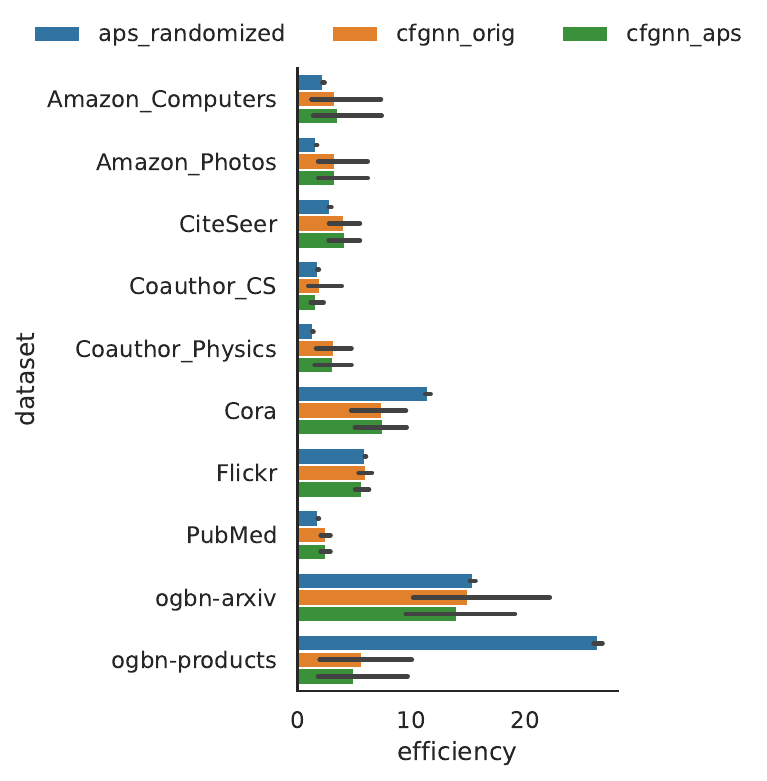}
      \caption{LC split style}
    \end{subfigure}
    \caption[]{Bar charts denoting efficiency for `cfgnn\_aps', `cfgnn\_orig’, and `aps\_randomized’ for both split styles at $\alpha = 0.1$. We see that `cfgnn\_aps’ improves or matches efficiency in most cases.}
\label{fig:conformal:cfgnn_aps_vs_orig_app}
\end{figure}

\noindent\textbf{Scaling CFGNN:} Expanding on Table \ref{tab:conformal:cfgnn_scalability} from Section \ref{sec:scaling_cfgnn}, Table \ref{tab:conformal:cfgnn_scalability_all} present the runtime improvements (see Table \ref{tab:conformal:cfgnn_runtime_all}) as well as the efficiencies (see Table \ref{tab:conformal:cfgnn_efficiency_all}) for each CFGNN implementation using the best CFGNN architecture found through hyperparameter tuning (see Section \ref{tab:cf_gnn:hpt}). We observe that our improved implementation achieves comparable efficiency to the original in only $50$ epochs as opposed to $1000$ used in the original.

\begin{table}
\caption{Impact of different CFGNN implementations between the original, batching, and batching+caching. Used the \textbf{best} CFGNN architecture (w.r.t. validation efficiency) for each dataset. We run 5 trials for each setup and report a $95\%$ confidence interval. OOM = Out of Memory}
\label{tab:conformal:cfgnn_scalability_all}
\begin{subtable}[c]{\textwidth}
    \centering
    \caption{Impact on Runtime}
    \label{tab:conformal:cfgnn_runtime_all}
    \begin{tabular}{lccc}
    \toprule
    dataset$(\downarrow)\left.\middle/\right.$ method$(\rightarrow)$ & original & batching & batching+caching \\
    \midrule
    CiteSeer & 379.28 $\pm$ 9.36 & 36.36 $\pm$ 0.97 & 27.45 $\pm$ 0.80 \\
    Amazon\_Photos & 496.36 $\pm$ 4.92 & 102.92 $\pm$ 0.95 & 54.44 $\pm$ 1.36 \\
    Amazon\_Computers & 664.98 $\pm$ 10.90 & 205.29 $\pm$ 3.96 & 73.85 $\pm$ 1.56 \\
    Cora & 1378.01 $\pm$ 13.35 & 203.61 $\pm$ 2.52 & 73.60 $\pm$ 1.53 \\
    PubMed & 571.60 $\pm$ 12.09 & 219.63 $\pm$ 5.23 & 109.68 $\pm$ 1.64 \\
    Coauthor\_CS & 638.56 $\pm$ 6.14 & 88.49 $\pm$ 1.14 & 31.67 $\pm$ 0.53 \\
    Coauthor\_Physics & 5942.30 $\pm$ 176.90 & 3918.38 $\pm$ 17.16 & 1585.26 $\pm$ 6.84 \\
    Flickr & 868.87 $\pm$ 9.98 & 567.39 $\pm$ 6.50 & 56.71 $\pm$ 1.30 \\
    ogbn-arxiv & 410.91 $\pm$ 8.29 & 373.19 $\pm$ 3.64 & 111.38 $\pm$ 1.95 \\
    ogbn-products & OOM & OOM & 8709.16 $\pm$ 55.00 \\
    \bottomrule
    \end{tabular}
    \vspace{3mm}
\end{subtable}
\begin{subtable}[c]{\textwidth}
    \centering
    \caption{Impact on Efficiency}
    \label{tab:conformal:cfgnn_efficiency_all}
    \begin{tabular}{lccc}
    \toprule
    dataset$(\downarrow)\left.\middle/\right.$ method$(\rightarrow)$ & original & batching & batching+caching \\
    \midrule
    \midrule
    CiteSeer & 2.52 $\pm$ 0.27 & 2.45 $\pm$ 0.19 & 2.42 $\pm$ 0.37 \\
    Amazon\_Photos & 1.06 $\pm$ 0.01 & 1.06 $\pm$ 0.02 & 1.12 $\pm$ 0.02 \\
    Amazon\_Computers & 1.29 $\pm$ 0.05 & 1.15 $\pm$ 0.01 & 1.14 $\pm$ 0.01 \\
    Cora & 6.81 $\pm$ 1.07 & 8.34 $\pm$ 1.12 & 7.96 $\pm$ 0.59 \\
    PubMed & 1.17 $\pm$ 0.00 & 1.16 $\pm$ 0.00 & 1.17 $\pm$ 0.00 \\
    Coauthor\_CS & 1.10 $\pm$ 0.01 & 1.15 $\pm$ 0.01 & 1.14 $\pm$ 0.01 \\
    Coauthor\_Physics & 0.97 $\pm$ 0.01 & 1.01 $\pm$ 0.03 & 0.99 $\pm$ 0.01 \\
    Flickr & 4.23 $\pm$ 0.07 & 4.23 $\pm$ 0.04 & 4.24 $\pm$ 0.03 \\
    ogbn-arxiv & 7.07 $\pm$ 0.05 & 7.28 $\pm$ 0.06 & 6.91 $\pm$ 0.01 \\
    ogbn-products & OOM & OOM & 1.86 $\pm$ 0.06 \\
    \bottomrule
    \end{tabular}
\end{subtable}
\end{table}

\clearpage
\section{Datasets and Hyperparameter Tuning Details}\label{sec:datasets_hpt}
\subsection{Datasets}
We selected datasets of varying sizes and origins to evaluate the performance of the graph conformal prediction methods. The first category of datasets is citation datasets, where the nodes are publications and the edges denote citation relationships. Nodes have features that are bag-of-words representations of the publication. The task is to predict the category of each publication. The citation networks we use are \textbf{CiteSeer}~\citep{yang2016revisiting}, CoraFull~\citep{shchur2018pitfalls}, an extended version of the common Cora~\citep{yang2016revisiting} citation network dataset, and \textbf{Pubmed}~\citep{yang2016revisiting}. The second category comes from the Amazon Co-Purchase graph ~\citep{mcauley2015image}, where nodes represent goods, edges represent goods frequently bought together, and node features are bag-of-words representations of product reviews. The task is to predict the category of a good. We use the \textbf{Amazon\_Photos} and \textbf{Amazon\_Computers} datasets. The last category is co-authorship networks extracted from the Microsoft Academic Graph~\citep{wang2020mag} used for KDD Cup'16. The nodes are authors, edges represent coauthorship, and node features represent paper keywords of the author's publications. The task is to predict the author's most active field of study. We use \textbf{Coauthor\_CS} and \textbf{Coauthor\_Physics}, which both come from the Microsoft Academic Graph~\citep{wang2020mag}. Other datasets that were use include \textbf{Flickr}~\citep{zeng2020graphsaint}, \textbf{ogbn-arxiv}, and \textbf{ogbn-products}~\cite{hu2020ogb}. These last three datasets have predefined splits for train/validation/test, shown in Table \ref{tab:conformal:datasets_predef_splits}, which we used when constructing our train/validation/calibration/test splits for the different split styles. We used the version given by the Deep Graph Library ~\citep{wang2019dgl} for all the datasets.
DGL uses an Apache 2.0 license, and OGB uses an MIT license.

Table \ref{tab:conformal:all_datasets} presents summary statistics for each dataset. These include the average local clustering coefficient (Avg CC), global clustering coefficient (Global CC)~\citep{newman2018networks}, an approximate lower bound on the diameter ($D$) given by \citet{magnien2009fast}, node homophily ratio ($\hat{H}$)~\citep{pei2020geomgcn}, and expected node homophily ratio ($H_{rand}$). Figure \ref{fig:label_distributions} shows the label distribution for each dataset.

\begin{table}[ht]
    \centering
    \caption{Summary statistics for all datasets evaluated.}
    \label{tab:conformal:all_datasets}
    \begin{footnotesize}
    \begin{tabular}{ccccccccccc}
        \toprule
        Dataset & Nodes & Edges & Classes & Features & Avg CC & Global CC & $D$ & $\hat{H}$ & $H_{rand}$\\
        \midrule
        CiteSeer & 3,327 & 9,228 & 6 & 3,703 & 0.141 & 0.130 & 28 & 0.722 & 0.178 \\
        Amazon\_Photos & 7,650 & 238,163 & 8 & 745 & 0.404 & 0.177 & 10 & 0.836 & 0.165 \\
        Amazon\_Computers & 13,752 & 491,722 & 10 & 767 & 0.344 & 0.108 & 10 & 0.785 & 0.208 \\
        Cora & 19,793 & 126,842 & 70 & 8,710 & 0.261 & 0.131 & 23 & 0.586 & 0.022\\
        PubMed & 19,717 & 88,651 & 3 & 500 & 0.060 & 0.054 & 17 & 0.792 & 0.357 \\
        Coauthor\_CS &  18,333 & 163,788 & 15 & 6,805 & 0.343 & 0.183 & 24 & 0.832 & 0.112 \\
        Coauthor\_Physics & 34,493 & 495,924 & 5 & 8,415 & 0.378 & 0.187 & 17 & 0.915 & 0.321 \\
        Flickr & 89,250 & 899,756 & 7 & 500 & 0.033 & 0.004 & 8 & 0.322 & 0.267 \\
        ogbn-arxiv & 169,343 & 1,166,243 & 40 & 128 & 0.118 & 0.115 & 62 & 0.567 & 0.077 \\
        ogbn-products & 2,449,029 & 61,859,140 & 47 & 100 & 0.411 & 0.130 & 27 & 0.817 & 0.106\\
        \bottomrule
    \end{tabular}
    \end{footnotesize}
\end{table}

\begin{table}[h]
    \centering
    \caption{Predefined splits from original source noted.}
    
    \begin{tabular}{cccc}
        \toprule
        Dataset & \# Train & \# Valid & \# Test\\
        \midrule
         Flickr & 44,625 & 22,312 & 22,313 \\
         ogbn-arxiv & 90,941 & 29,799 & 48,603 \\
         ogbn-products & 196,615 & 39,323 & 2,213,091\\
        \bottomrule
    \end{tabular}
    \label{tab:conformal:datasets_predef_splits}
\end{table}
    
\begin{figure}
    \centering
    \includegraphics[width=0.75\linewidth]{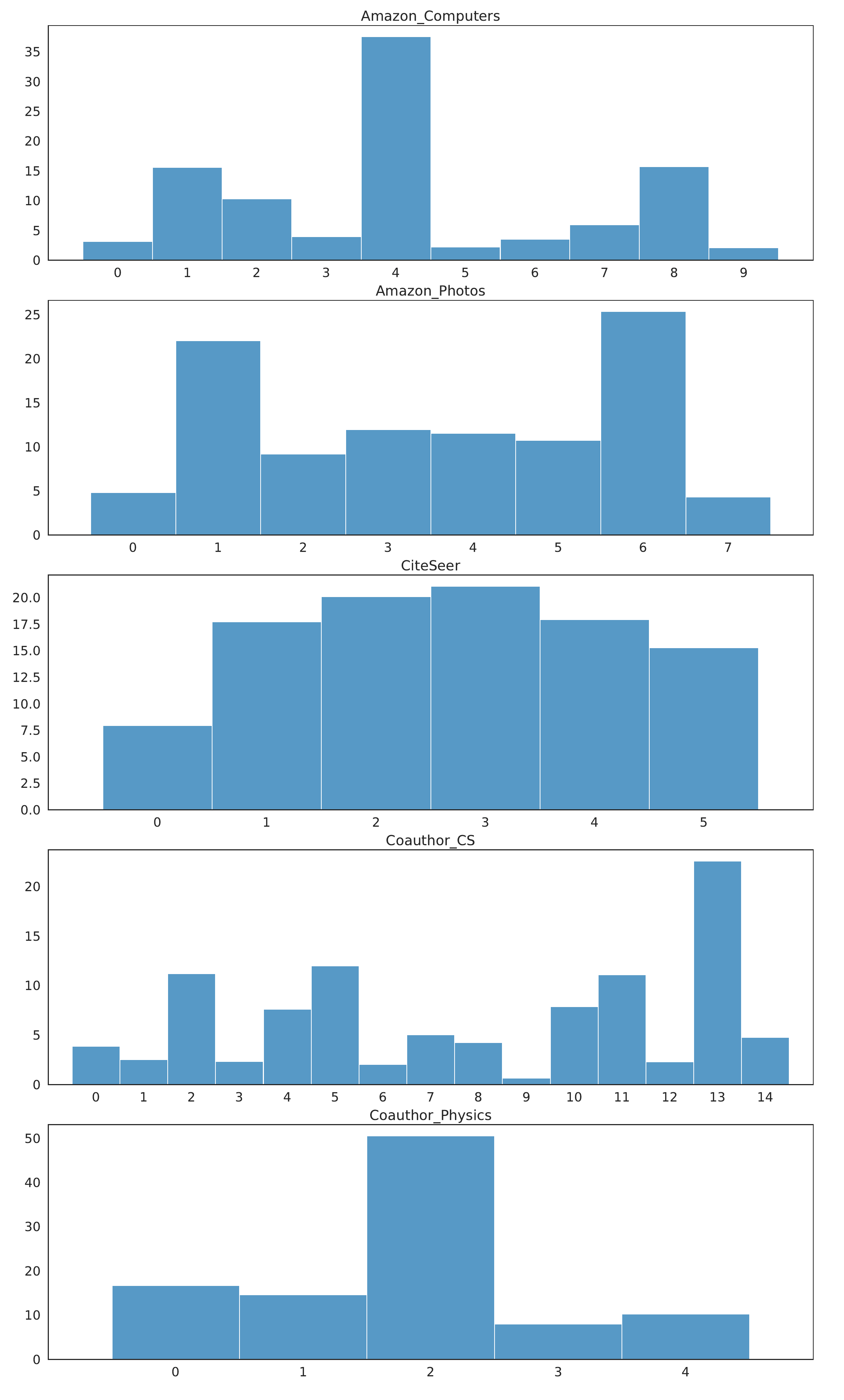}
    \caption[]{Plots of label distribution for each dataset. Each class for each dataset has at least one occurrence.}
    \label{fig:label_distributions}
\end{figure}
\begin{figure}
    \ContinuedFloat
    \centering
    \includegraphics[width=0.75\linewidth]{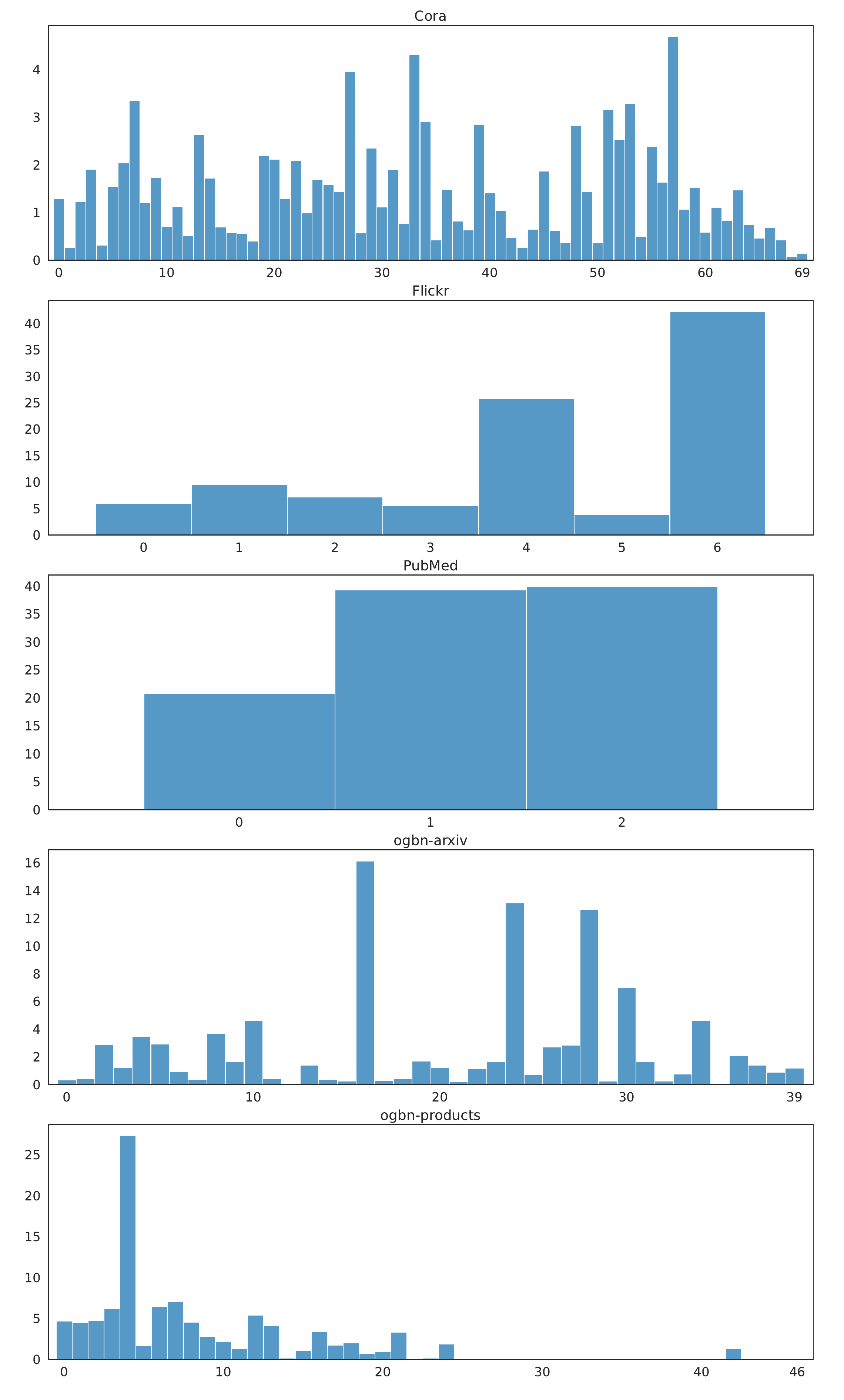}
    \caption[]{Plots of label distribution for each dataset. Each class for each dataset has at least one occurrence (cont.).}
\end{figure}
\clearpage
\subsection{Hyperparameter Tuning}
Hyperparameter tuning was done using Ray Tune \citep{liaw2018tune}. The hyperparameters for the base model were tuned via random search using Table \ref{tab:base_gnn:hpt} for each model type (e.g., GCN, GAT, and GraphSAGE), for each dataset, except for the OGB \citep{hu2020ogb} datasets, and each splitting scheme (FS vs LC and different value settings). For the OGB datasets, we took the hyperparameters and architectures from the corresponding leaderboard for all splitting schemes.

Once we got the best base model for each dataset and splitting scheme, we tune the hyperparameters for the CF-GNN model via random search using Table \ref{tab:cf_gnn:hpt} for each model type (e.g., GCN, GAT, and GraphSAGE), for each dataset\footnote{The configuration files for all the experiments and best architectures are available in our codebase: \url{https://github.com/pranavmaneriker/graphconformal-code}.}. For the FS splitting schemes, we enfore $\abs{\calib} = \abs{\test} = (1 - \abs{\train} - \abs{\valid}) / 2$. 
For the ogbn-products dataset, due to its size, we used a batch size of $512$ for the CF-GNN training and also used a NeighborSampler with fanouts $[10, 10, 5]$ rather than a MultiLayerFullNeighborSampler.

All experiments with the ogbn-products datasets were run on a single A100 GPU, while the remaining experiments for the other datasets were run on a single P100 GPU. 

\begin{table}[h!]
    \centering
    \caption{Hyperparameter search space for the base GNN model for non-OGB datasets. The last two rows are layer-type specific for GAT and GraphSAGE, respectively.}
    \label{tab:base_gnn:hpt}
    \begin{tabular}{ll}
        \toprule
        Hyperparameter & Search Space \\
        \midrule
        batch\_size & $64$\\
        lr & loguniform$(10^{-4}, 10^{-1})$\\
        hidden\_channels & $\{16, 32, 64, 128\}$\\
        layers & $\{1, 2, 4\}$\\
        dropout & uniform$(0.1, 0.8)$\\
        \midrule
        heads & $\{2, 4, 8\}$\\
        aggr\_fn & \{mean, gcn, pool, lstm\}\\
        \bottomrule
    \end{tabular}
\end{table}
\begin{table}[h!]
    \centering
    \caption{Hyperparameter search space for the CF-GNN model. The last two rows are layer-type specific for GAT and GraphSAGE, respectively.}
    \label{tab:cf_gnn:hpt}
    \begin{tabular}{ll}
        \toprule
        Hyperparameter & Search Space \\
        \midrule
        batch\_size & $64$\\
        lr & loguniform$(10^{-4}, 10^{-1})$\\
        hidden\_channels & $\{16, 32, 64, 128\}$\\
        layers & $\{1, 2, 3, 4\}$\\
        dropout & uniform$(0.1, 0.8)$\\
        $\tau$ & loguniform$(10^{-3}, 10^{1})$\\
        \midrule
        heads & $\{2, 4, 8\}$\\
        aggr\_fn & \{mean, gcn, pool, lstm\}\\
        \bottomrule
    \end{tabular}
\end{table}
\clearpage
\section{Additional Empirical Results, Analysis, and Insights}\label{appx:more_results}
This section expands the figures and tables in the main body for all datasets considered. 
Figure \ref{fig:conformal:tps_all} compares TPS and TPS-Classwise for all the datasets. We observe that TPS-Classwise achieves the desired label-stratified coverage of $0.9$ while TPS doesn't necessarily for the FS split. For the LC split, we observe that TPS-Classwise slightly improves on TPS in terms of label-stratified coverage; however, neither method necessarily achieves the target label-stratified coverage. In Figure \ref{fig:conformal:efficiency:tps-vs-classwise}, we find TPS-Classwise generally is less efficient than TPS for FS and LC splitting. Figure \ref{fig:conformal:aps_cov_all} compares the label-stratified coverage for APS with and without randomization. We observe that randomization does sacrifice the label-stratified coverage for both FS and LC splitting. Noticeably, the change in coverage is smaller for the LC split. Figure \ref{fig:conformal:aps_all} compares the efficiency of APS with and without randomization at $\alpha = 0.1$. For both split types (FS \& LC), we observe that the randomized version of APS produces more efficient prediction sets, in line with Theorem \ref{them:APS:efficiency}. The efficiency improvements come with a sacrifice in label-stratified coverage since smaller prediction set sizes are preferred over covering every class, particularly if the classes are rare. To visualize this trade-off, we observe that in both Figure \ref{fig:conformal:tps_all} and \ref{fig:conformal:aps_cov_all} the difference in label stratified coverage for ogbn-products with FS splitting is more extreme than with other datasets and with LC splitting. This is because ogbn-products has a lot of classes that have almost no representation (see Figure \ref{fig:label_distributions}). Datasets with near-uniform label distribution (e.g., PubMed, CiteSeer) -- or when using LC split, which controls for label counts -- we observe that label-stratified coverage isn't sacrificed as much in the name of efficiency.

\begin{figure}
    \centering
      \includegraphics[width=0.925\linewidth]{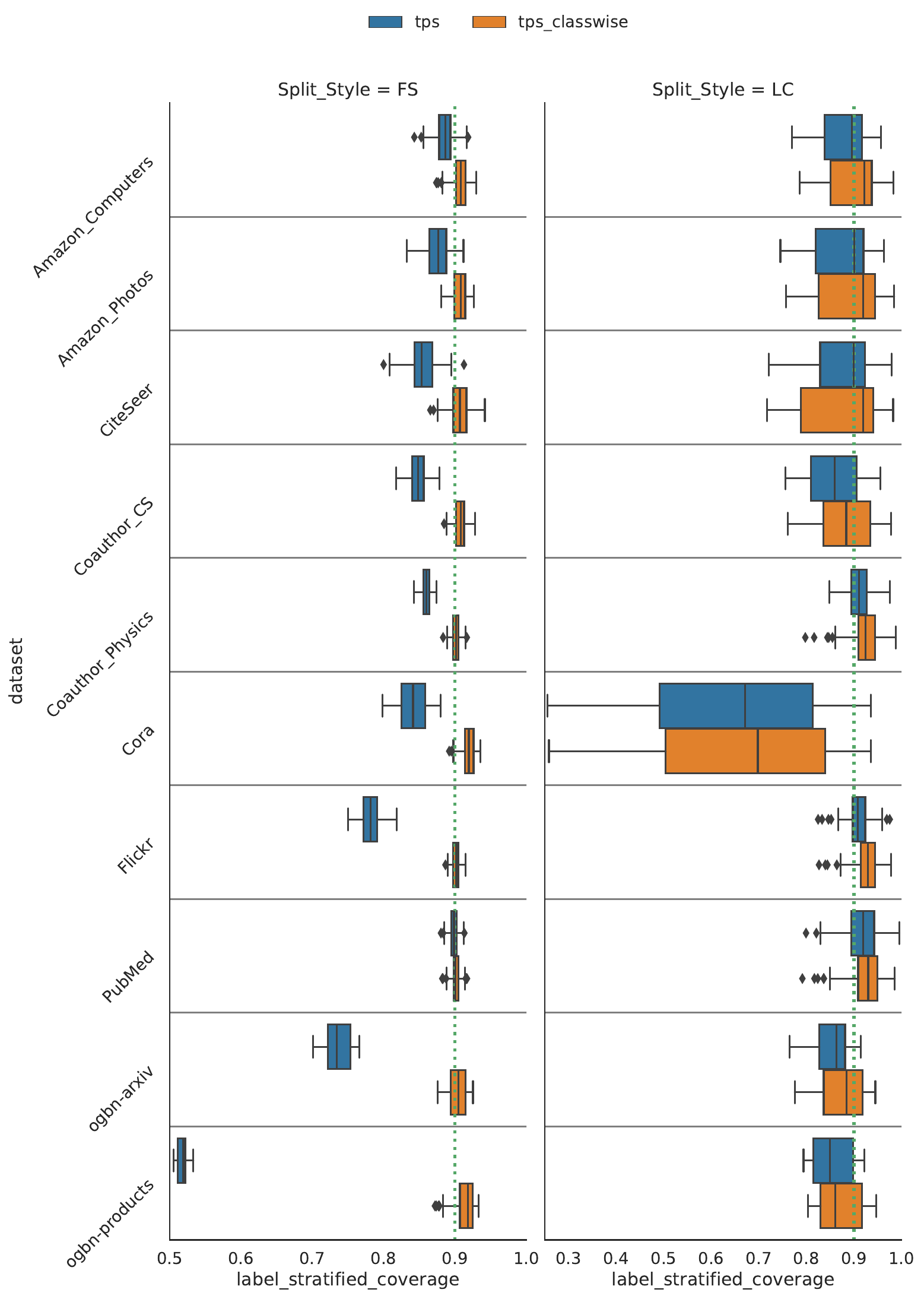}
      \caption[]{Plots for TPS vs TPS-Classwise for all the data sets at $1-\alpha = 0.9$ coverage (green dotted line). TPS-Classwise, on average, meets label-stratified coverage for FS split (left). For the LC split, the label stratified coverage slightly improves with TPS-Classwise but does not necessarily meet the $1-\alpha$ coverage.}
      \label{fig:conformal:tps_all}
\end{figure}

\begin{figure}
    \centering
    \begin{subfigure}{\linewidth}
        \centering
        \includegraphics[width=0.9\linewidth]{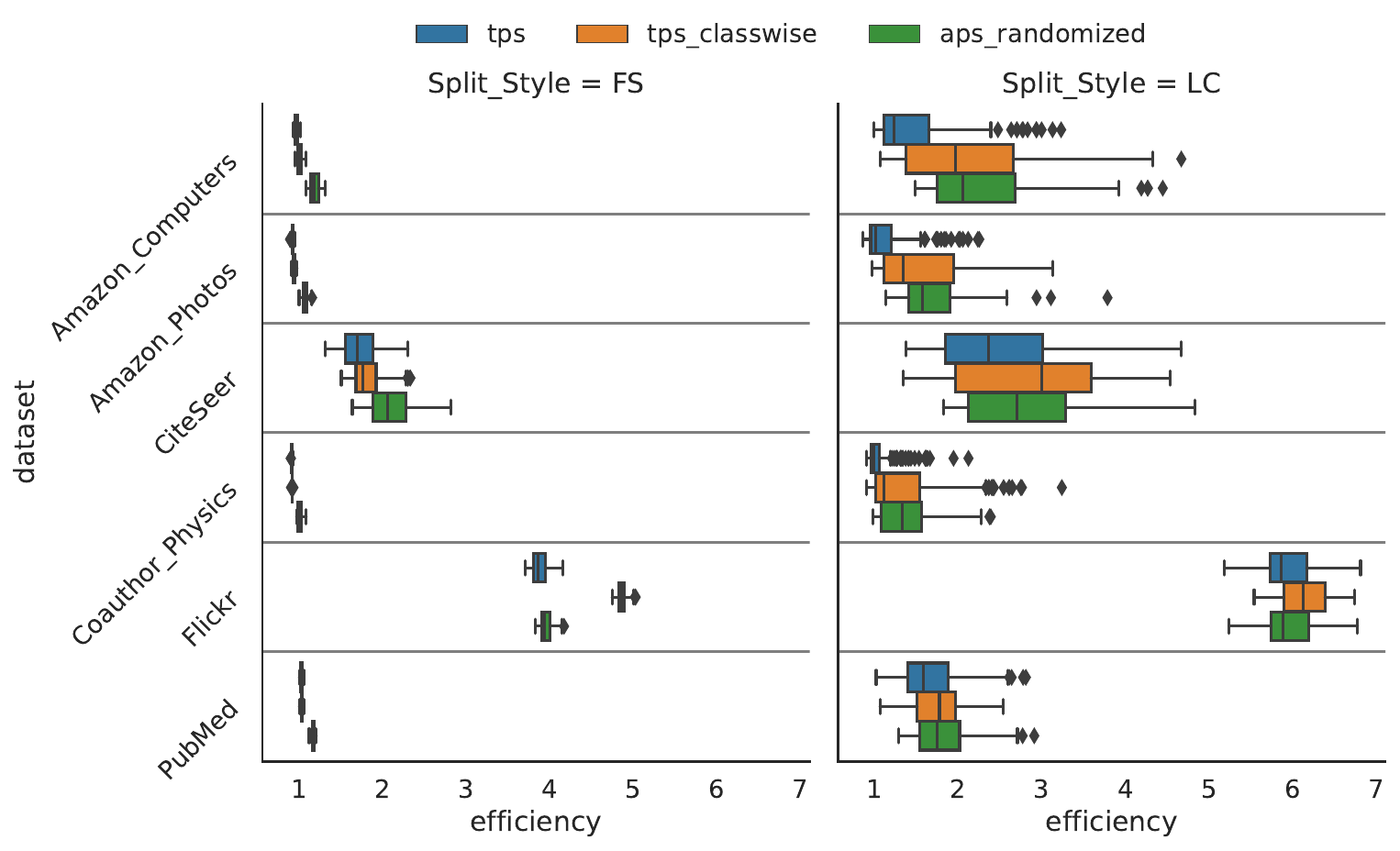}
    \end{subfigure}
    \begin{subfigure}{\linewidth}
        \centering
        \includegraphics[width=0.9\linewidth]{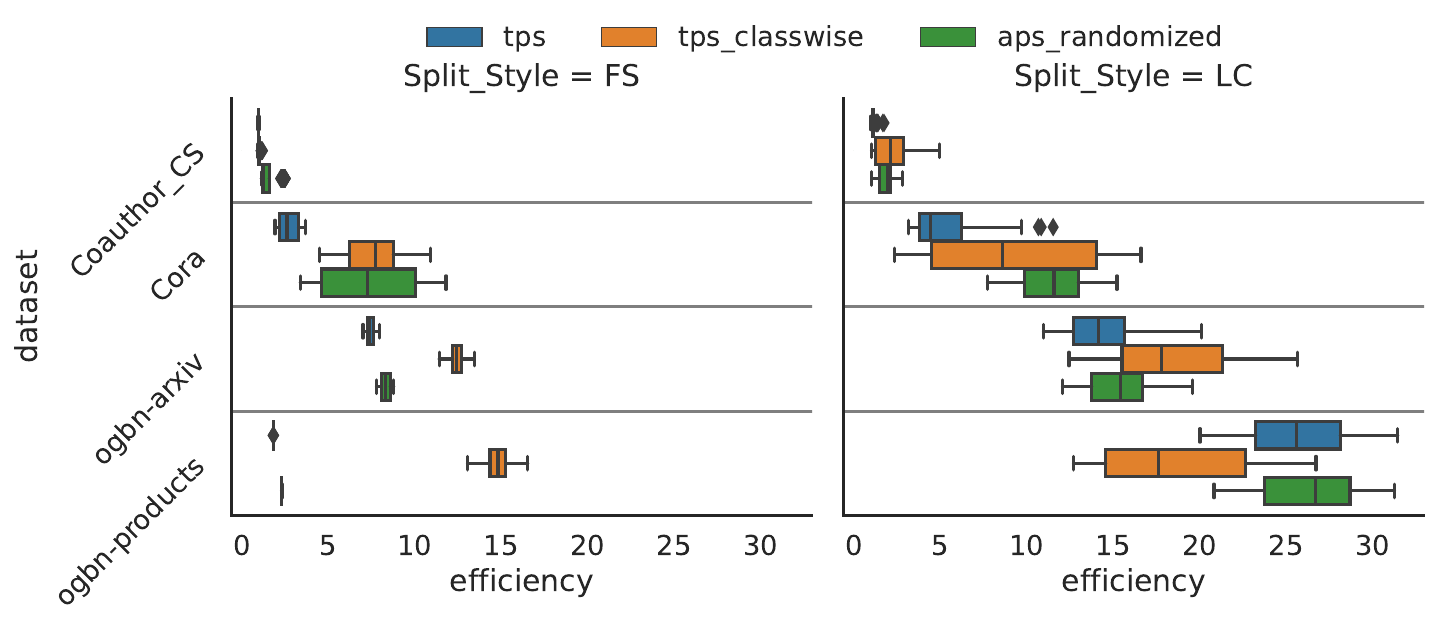}
    \end{subfigure}
    \caption{Plot for TPS vs TPS-classwise at $\alpha=0.1$. For the FS split type, TPS-classwise becomes more inefficient compared to TPS for larger graph sizes, but is competitive in other settings. To maintain label-stratified coverage, TPS-classwise may be forced to overcover certain classes at the cost of efficiency.}
    \label{fig:conformal:efficiency:tps-vs-classwise}
\end{figure}

\begin{figure}[h!]
    \centering
      \includegraphics[width=0.9\linewidth]{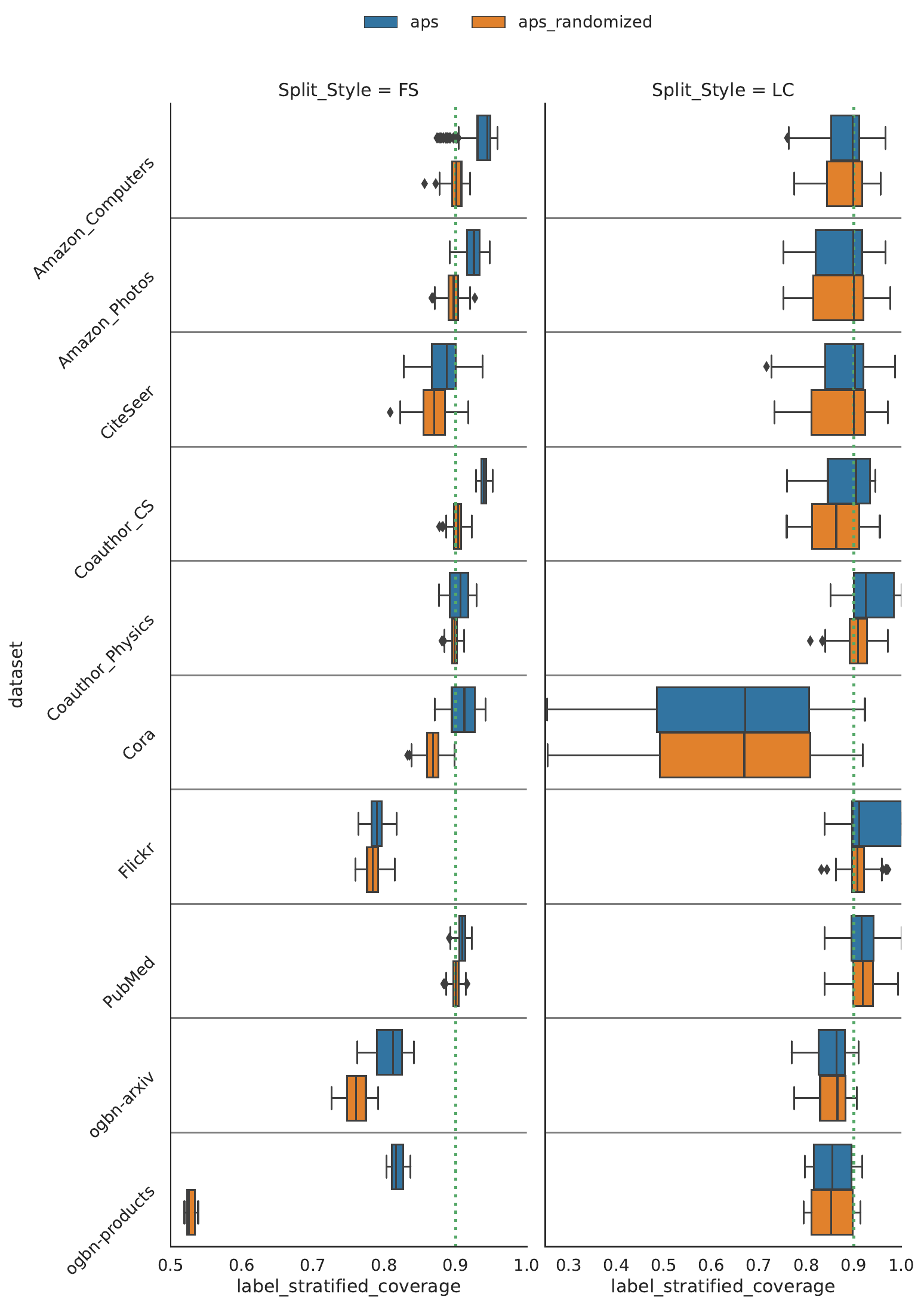}
      \caption[]{Plots for APS vs APS Randomized for all the data sets at $1-\alpha = 0.9$ coverage (green dotted line). For the FS split type, APS-Randomized has a lower label-stratified coverage. However, with the LC split type, the decrease in label-stratified coverage is not as significant.}
      \label{fig:conformal:aps_cov_all}
\end{figure}

\begin{figure}[h!]
    \centering
    \begin{subfigure}{\linewidth}
    \includegraphics[width=\linewidth]{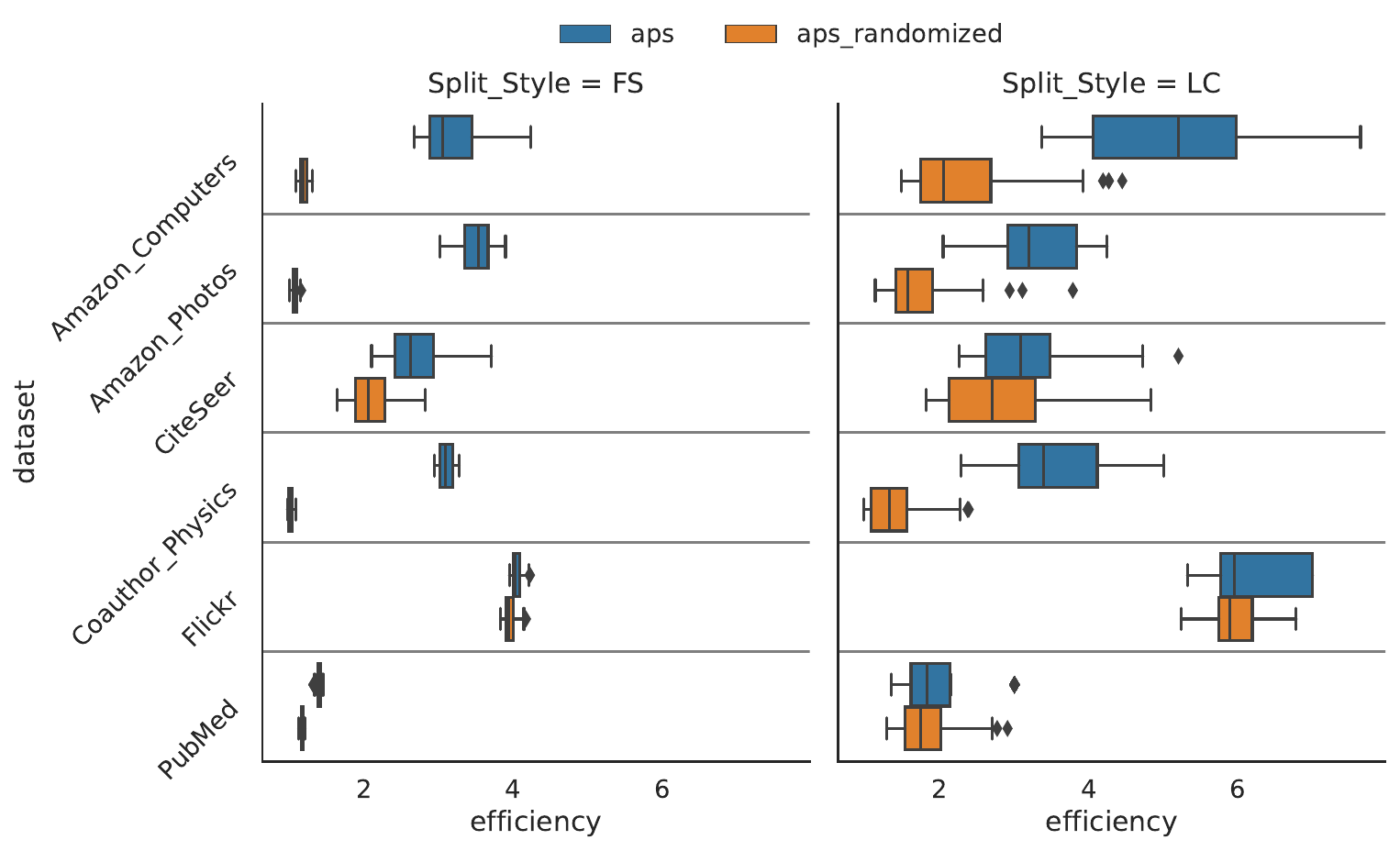}
    \end{subfigure}\\
    \begin{subfigure}{0.92\linewidth}
        \includegraphics[width=\linewidth]{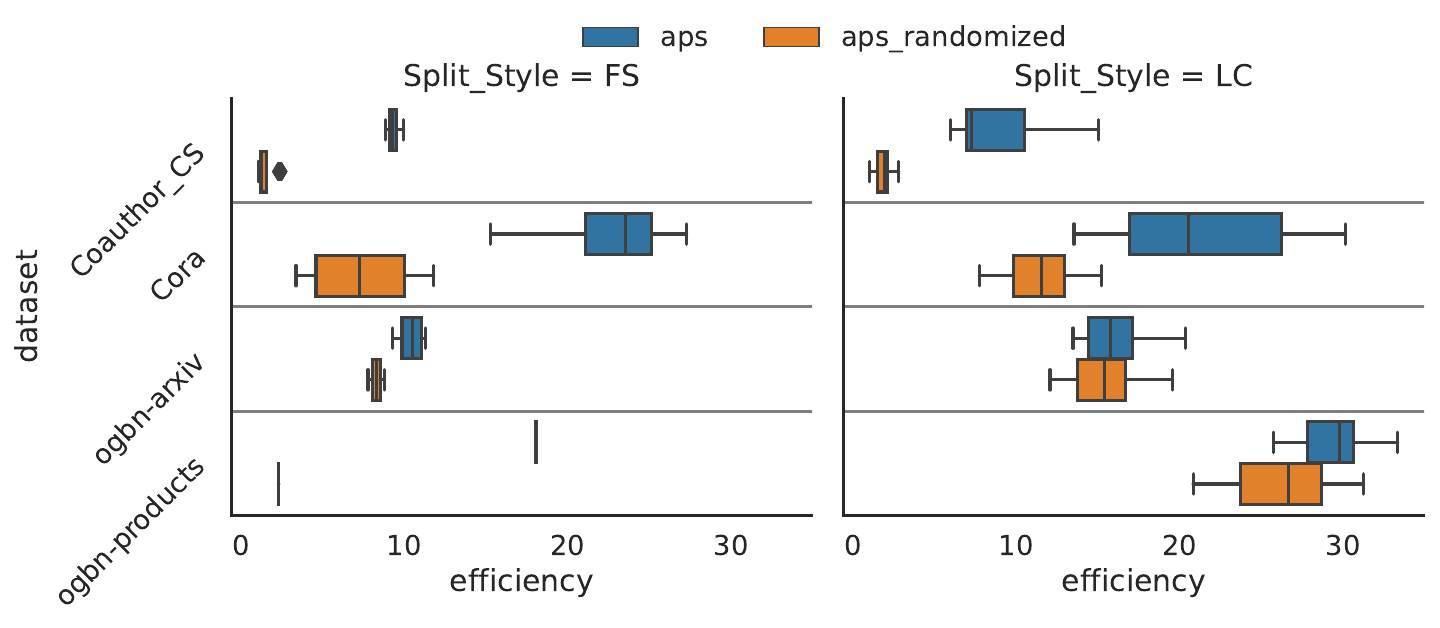}
    \end{subfigure}
      \caption[]{Plots for APS vs APS-randomized at $\alpha = 0.1$. For both split types (FS \& LC), the randomized version of APS produces more efficient sets for all the datasets.}
       \label{fig:conformal:aps_all}
\end{figure}

\clearpage
\subsection{Overall Results}
In Figure \ref{fig:conformal:all_methods_best}, we provide a plot of all the different methods discussed in this work for each dataset across different values of $\alpha$. If applicable, for each method, we show the best-performing version, e.g., APS with randomization vs without and CFGNN with APS training (`cfgnn\_aps') rather than TPS training (`cfgnn\_orig'). We present the results for $k = 5$ for the different NAPS variations, since almost all datasets -- except for CiteSeer and ogbn-arxiv -- achieved their best efficiencies at or before that point.
\begin{figure}[hbp!]
    \centering
    \includegraphics[width=0.625\linewidth]{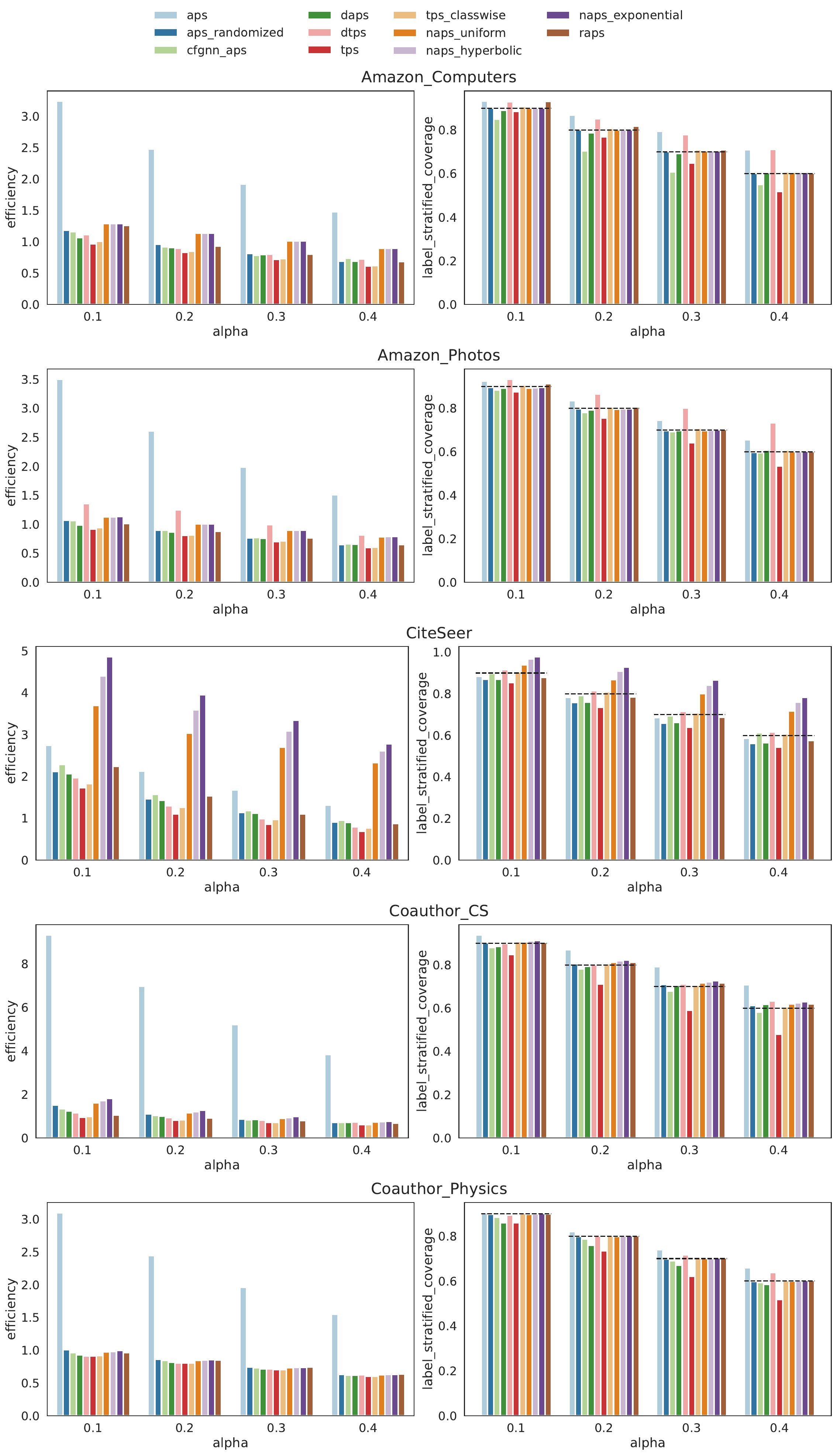}
    \caption[]{Plots for efficiency vs $\alpha$ for all the major methods (with best parameters) across all the datasets (cont.). Among the baseline methods, TPS consistently has the best efficiency. The results are for the FS split style. The dashed black line indicates the desired label-stratified coverage.}
    \label{fig:conformal:all_methods_best}
\end{figure}
\begin{figure}[htbp!]
    \ContinuedFloat
    \centering
    \includegraphics[width=0.75\linewidth]{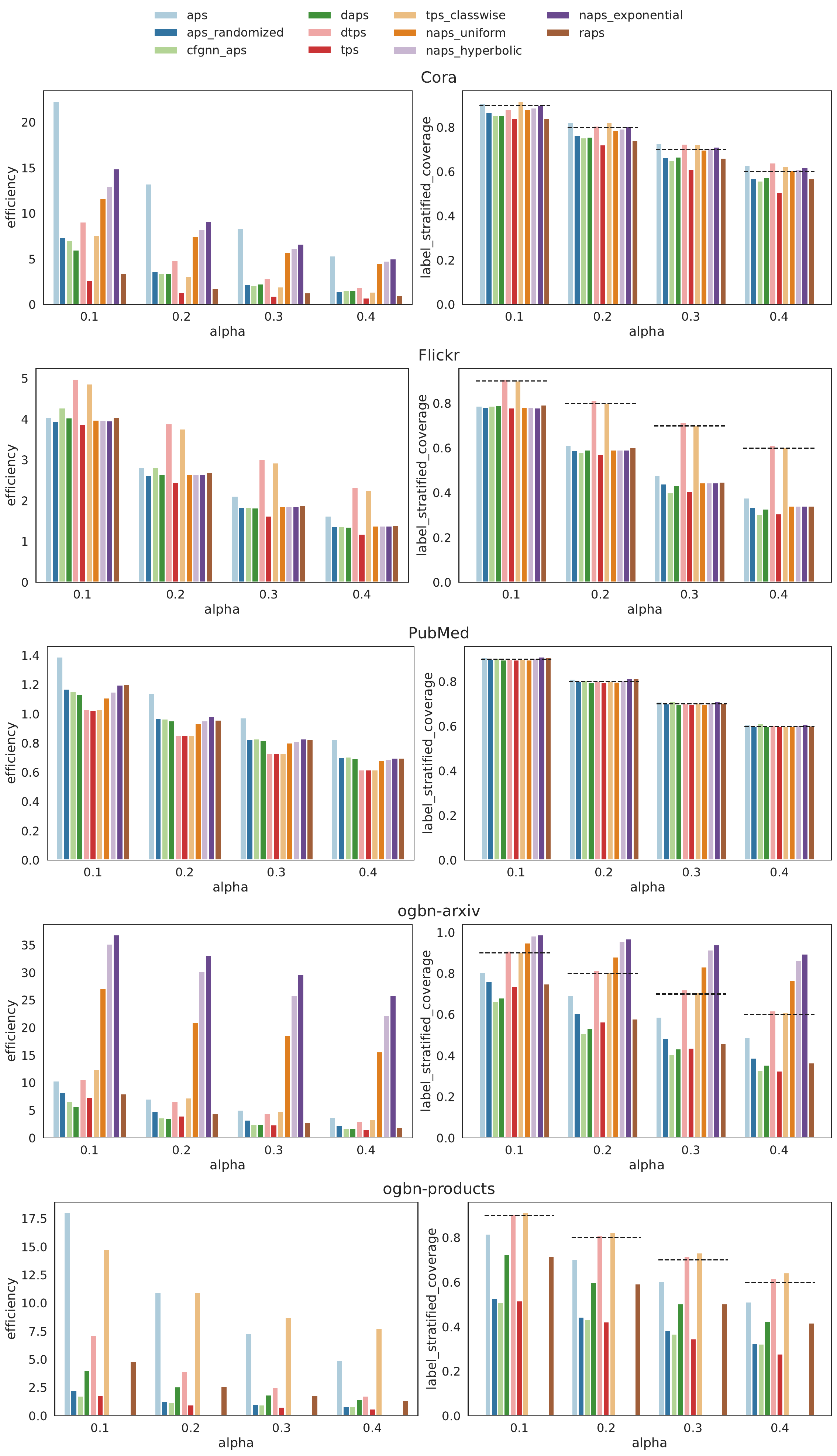}
    \caption[]{Plots for efficiency vs $\alpha$ for all the major methods (with best parameters) across all the datasets (cont.). Among the baseline methods, TPS consistently has the best efficiency. Results are for the FS split style. The dashed black line indicates the desired label-stratified coverage. NAPS results for the ogbn-products dataset are omitted due to size and lack of data points with existing hardware.}
\end{figure}

\clearpage
\section{Related Works}
In this work, we focus on the prominent methods of graph conformal prediction for the node classification task under a transductive setting. These included standard conformal prediction methods, like TPS and APS, and graph-specific methods, like DAPS and CFGNN. Some works consider other graph-based scenarios and tasks. Within node classification, we can also consider the inductive setting, where nodes/edges arrive in a sequence, thus violating the exchangeability assumption. One such method is Neighborhood Adaptive Prediction Sets (NAPS), which we consider a transductive variation in this work. Beyond node classification on static graphs, very recently, there has been some work in link prediction~\citep{zhao2024linkpred, marandon2024conformallinkpredictionfalse}. Our work can potentially be adapted to these recent ideas (especially \cite{zhao2024linkpred}), although they make additional assumptions beyond the scope of the current work. Additionally, how these ideas would expand for dynamic graphs would be an interesting future direction to pursue~\citep{dynamicgraphcp}. 

Beyond conformal prediction, many works propose ways to construct model-agnostic uncertainty estimates for classification tasks~\citep{uqreview, guo2017calibration, kull2019}. There are also methods specific to GNNs~\citep{hsu2022, wang2021} that leverage network principles (e.g., homophily). However, unlike conformal prediction, these methods can fail to provide the desired coverage guarantees. For an empirical comparison of popular UQ methods for GNN, please reference~\citet{huang2024uncertainty}.

\end{document}